\journal{Journal of \LaTeX\ Templates}
\DeclareMathOperator*{\argminA}{\arg\min}
\theoremstyle{definition}
\newtheorem{definition}{Definition}[section]
\definecolor{mygray}{gray}{0.9}
\newcommand{\redcheck}{{\color{red}\ding{51}}}
\newcommand{\greencheck}{{\color{green}\ding{51}}}
\newcommand{\blackcheck}{{\bf \color{black}\ding{51}}}
\newcommand{\redxmark}{{\color{red}\ding{55}}}%
\newcommand{\greenxmark}{{\color{green}\ding{55}}}%
\newtheorem{theo}{Proposition}[section]
\newtheorem*{propApp}{Proposition}
\newtheorem{hyp}{Hypothesis}
\begin{document}

\begin{frontmatter}

\title{A New Similarity Space Tailored for Supervised Deep Metric Learning}

\author[ufmg]{Pedro H. Barros\corref{cor1}, \corref{cor3}}
\ead{pedro.barros@dcc.ufmg.br}

\author[ufal]{Fabiane Queiroz\corref{cor2}}\ead{fabiane.queiroz@laccan.ufal.br}

\author[ufmg]{Flavio Figueredo\corref{cor1}}\ead{flaviovdf@dcc.ufmg.br}
\author[ufmg]{Jefersson A.\ dos Santos\corref{cor2}}\ead{jefersson@dcc.ufmg.br}
\author[ufmg]{Heitor S.\ Ramos\corref{cor2}}\ead{ramosh@dcc.ufmg.br}

\address[ufmg]{Departamento de Ci\^{e}ncia da Computa\c{c}\~{a}o, Universidade Federal de Minas Gerais (UFMG)\\Belo Horizonte, MG, Brazil - CEP 31270-901}

\address[ufal]{Centro de Ci\^encias Agr\'arias,, Universidade Federal de Alagoas (UFAL)\\ Macei\'o, AL, Brazil - CEP 57072-900 }

\cortext[cor3]{Corresponding author}


\begin{abstract}
We propose a novel deep metric learning method. Differently from many works on this area, we defined a novel latent space obtained through an autoencoder. The new space, namely S-space, is divided into different regions that describe the positions where pairs of objects are similar/dissimilar. We locate makers to identify these regions. We estimate the similarities between objects through a kernel-based t-student distribution to measure the markers' distance and the new data representation. In our approach, we simultaneously estimate the markers' position in the S-space and represent the objects in the same space.
Moreover, we propose a new regularization function to avoid similar markers to collapse altogether. We present evidences that our proposal can represent complex spaces, for instance, when groups of similar objects are located in disjoint regions. We compare our proposal to 9 different distance metric learning approaches (four of them are based on deep-learning) on 28 real-world heterogeneous datasets. According to the four quantitative metrics used, our method overcomes all the nine strategies from the literature.
\end{abstract}

\begin{keyword}
Similarity space\sep Deep metric learning\sep Latent space  
\end{keyword}

\end{frontmatter}


\section{Introduction}

A distance metric is a function that provides a way to measure how far apart two elements of a set are from each other. 
Among various works involving machine learning applications, the most commonly used metric is the Euclidean distance~\cite{de2000mahalanobis}. Methods that use Euclidean distance usually consider that all variables' covariance is zero, i.e., there is no correlation among them, but this assumption is hardly found in the real world~\cite{XIANG20083600}. Euclidean distance and cosine similarity are popular for many applications. For instance, the cosine similarity is vastly used for text mining~\cite{6420834}. Even showing its effectiveness in several applications, the cosine similarity assumes equal weight for every dimension, limiting its application~\cite{6420834}.


Euclidean and cosine distance are known as data-independent techniques, once  they are defined without any prior knowledge about the data. Learning distances, a.k.a Metric Learning (MeL), from data is a common attempt to improve machine learning approaches~\cite{Weinberger:2009:DML:1577069.1577078,Liu:2018:ESR:3219819.3220031, NIPS2018_7377, Huai:2018:MLP:3219819.3219976, Inaba:2019:FEB:3292500.3330975, NIPS2004_2566, WANG2019202}. In modern machine learning research, MeL is a fundamental technique for several different applications such as sorting~\cite{mcfee2010metric}, classification (e.g., k-nearest neighbors), clustering~\cite{5611527}, and ranking~\cite{pmlr-v80-vogel18a}.

MeL aims to estimate distance function parameters based on a given training set. A common approach is to frame MeL as a convex optimization problem~\cite{NIPS2002_2164}. Thus, a distance $d$ can be defined as $ d_{\bm M} (x, y) = \sqrt {(x - y) ^ T \bm M (x - y)}$, in which $\bm M$ is a positive semi-definite matrix. In the case $\bm M$ is the covariance matrix, we have the \textit{Mahalanobis} distance~\cite{de2000mahalanobis}. Classic methods proposed for metric learning use $ d_{\bm M}$ to search for the best linear space that captures the semantics of the data (e.g., in a classification setting, we search for $\bm M$  that minimizes the miss-classification loss). However, the linear transformation has some limitations, as it cannot model high-order correlations between the original data dimensions~\cite{CAO2019217}.


 
Using MeL, we can define metrics that consider the covariance of attributes. Additionally, MeL approaches do not necessarily assume linear relationships, although classical MeL techniques like the {\it Mahalanobis} distance~\cite{de2000mahalanobis} assumes a linear space. Moreover, MeL does not assume equal weights for every attribute~\cite{6420834}. The assumption that MeL can be treated as a convex optimization problem can also be relaxed using the appropriate model.

To tackle the issues mentioned above, deep learning techniques are currently being used for MeL~\cite{Hou_2019_CVPR, Zheng_2019_CVPR,Niethammer_2019_CVPR,NIPS2019_8339,Shen_2020_WACV,Paixao_2020_CVPR}. Since these proposals seek to learn a non-linear feature representation, they usually overperform standard techniques found in the literature. Neural Networks (NNs) are natural candidates and are typically used to learn similarity metric ~\cite{drlim, 1467314}.

\begin{figure*}
\centering
\begin{subfigure}{.32\textwidth}
  \centering
  \includegraphics[width=1.0\linewidth]{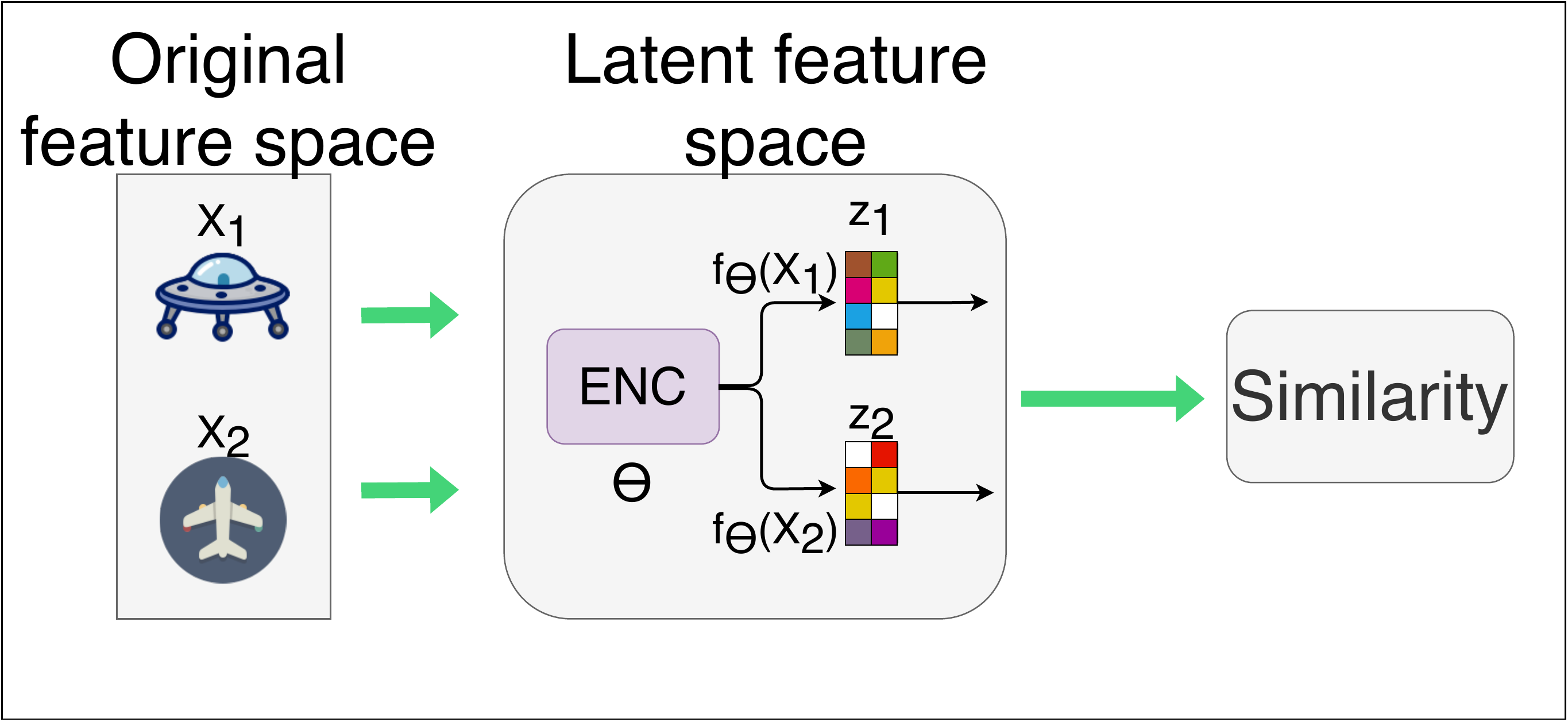}
  \caption{Canonical scheme of DMeL.}
  \label{fig:siamese_canonical}
\end{subfigure}
\begin{subfigure}{.461\textwidth}
  \centering
  \includegraphics[width=1.0\linewidth]{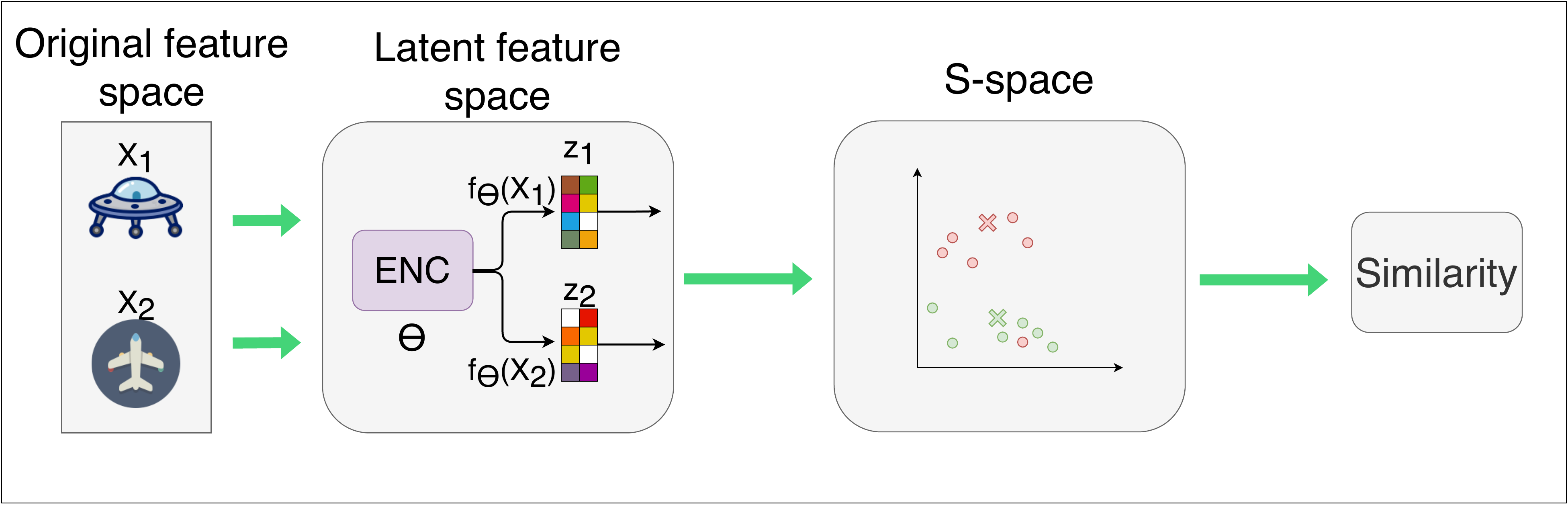}
  \caption{Our scheme of DMeL with S-space.}
  \label{fig:siamese_our}
\end{subfigure}

\caption{Comparison between the canonical model of DMeL and the model used in this work.}
\label{fig:siamese}
\end{figure*}

The representation of compressed data found by a Neural Network (NN) is commonly named as latent feature space and the data in this space as latent data, as we can see in Figure~\ref{fig:siamese_canonical}. 
Our work hypothesizes that the latent feature space captured by NNs can be improved with an auxiliary space. For instance, common NNs-based Deep Metric Learning (DMeL) approaches extract a latent space that encodes similar and dissimilar {\it points}, but not the separability between them. However, this single representation is limited, as it does not capture {\it pairwise information}. 

Unlike the literature, our approach employs NNs, fed by labeled original pairwise data, to find a {\it latent pairwise space with markers}. This approach is shown in Figure~\ref{fig:siamese_our} as we now detail. In our method, data comes in pairs of vectors $(\mathbf{x}_i, \mathbf{x}_j)$ which are deemed as similar ($y_{ij}=+$) or dissimilar ($y_{ij}=-$).
The first part of our architecture is an autoencoder. After encoding the pair of input objects, our major novelty is on converting {\it data pairs} to a new Similarity-space (called S-space). A data point $\mathbf{x}_i$ is mapped into $\mathbf{z}_i = f_\Theta(\mathbf{x}_i)$ in the latent space, where $\Theta$ are model parameters. The S-Space, for a pair of points $i$ and $j$ is composed of two novel ideas. Firstly, we represent points as a similarity vector between pairs, i.e., $\mathbf{s}_{ij} = |\mathbf{z}_i - \mathbf{z}_j|$. Secondly, {\it and more importantly}, we define markers that act as reference points to similar (${\bm \mu}^+_p \in \mathcal{M}^+$) and dissimilar (${\bm \mu}^-_n \in \mathcal{M}^-$) regions. Markers' position are learned in the optimization process.

Our loss function is comprised of three parts. Firstly, an autoencoder loss function takes care of data encoding and decoding. The second loss function captures the sum of distances between similarity vectors ($\mathbf{s}_{ij}$) and markers (${\bm \mu}_m \in \mathcal{M}^+ \cup \mathcal{M}^-
)$, in this work, we used a T-student kernel to estimate this distance and we apply
 a cross-entropy loss function between the input labels and the model output.
The last part of our loss function is called a {\it repulsive regularizer}. It is inversely proportional to the distance of the markers of the same class. 
This loss function ensures that markers are different (the loss increases as markers become similar), ensuring some diversity level on the marker set. It attempts that markers capture complex similarity regions such as disjoint similarity/dissimilarity regions.



We named our approach as \textit{Supervised Distance Metric learning Encoder with Similarity Space} (SMELL). Our method is herein described as supervised learning, but it can be appropriately extended to unsupervised and semi-supervised learning. Through a wide range of experiments on 28 datasets, we show that SMELL provides gains over the state-of-the-art in all of them. To explain its accuracy, we show evidence supporting the following two hypotheses.



\begin{hyp}(\textbf{H\ref{hpy:disjoint_regions}})
\label{hpy:disjoint_regions}
Using SMELL, the markers group data points considered similar (in our context, which have the same labels) and dissimilar (different labels) into disjoint regions in S-space. 
\end{hyp}

\begin{hyp}(\textbf{H\ref{hyp:separability}})
\label{hyp:separability}
SMELL increases the input pairs' separability in the latent feature space for different types of pairs (similar/dissimilar).
\end{hyp}
%


Overall, the main contributions of our work are:

\begin{enumerate}[(i)]
    \item a new data representation space called \textit{Similarity space} (S-space) that separates regions where similar/dissimilar objects lie together and help the convergence of the model. We also investigate interpretability and data visualization in this space. S-space can capture complex regions that can model similar points in disjoint regions;
    \item a new distance metric learning method that simultaneously learns a latent representation of the data and the markers' position in the S-space;
    \item we found evidence that the number of markers is a virtual hyperparameter of the model and does not need to be tuned.
    \item a new regularization function to avoid model overfitting called \textit{repulsive regularizer}.
\end{enumerate}

\if0
\begin{figure}
\centering
\includegraphics[width=3.2in]{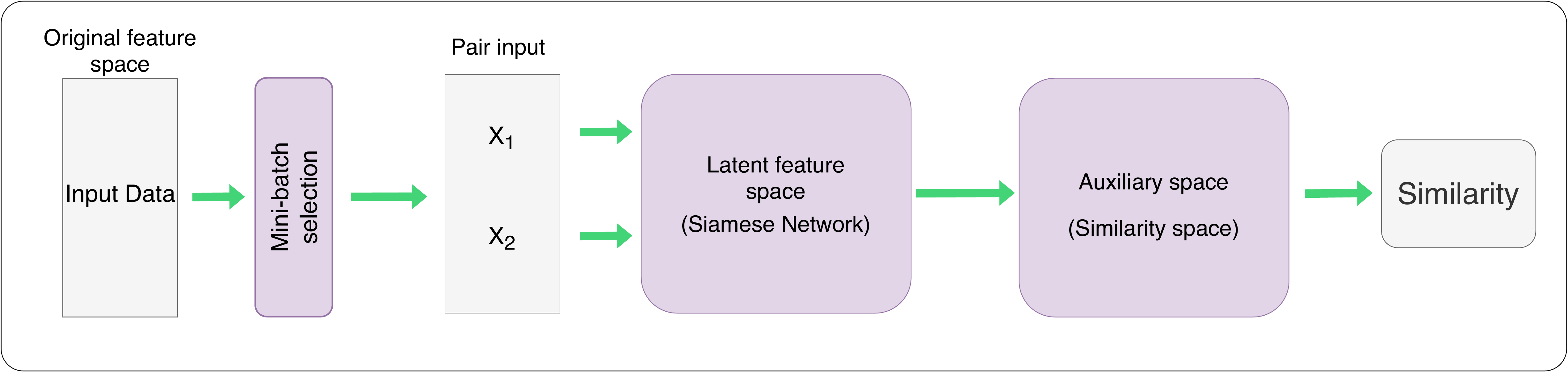}
\caption{Simple schemer for our method. The purple components represent differential features of our proposal.}
\label{fig:sche_autoencoder}
\end{figure}

\begin{table}[]
\centering
\tiny
\begin{tabular}{c|cccccc}
\hline
\multicolumn{1}{c|}{\multirow{2}{*}{Features}} & \multicolumn{6}{c}{Propose}   \\ \cline{2-7} 
\multicolumn{1  }{c|}{}                                & SMELL & Triplet\cite{Schroff_2015_CVPR} & Constrative~\cite{drlim}  & MSLoss~\cite{wang2019multi} & FastAp~\cite{Fastaploss} & NPair~\cite{Nparloss} \\ \hline
                        Auxiliary latent space                              &   \greencheck    &    \redxmark    &    \redxmark   &  \redxmark & \redxmark & \redxmark       \\
        Element selection strategy                                             & \greenxmark      &  \redcheck       & \greenxmark & \greenxmark & \redcheck     &  \redcheck    \\
        Latent ``latticed'' space
                                                     &  \greencheck     &   \redxmark      &    \redxmark   & \redxmark  & \redxmark & \redxmark      \\ \hline
\end{tabular}
\caption{Main features for the deep metric learning proposals used in this article. The presence or absence of a given characteristic is represented by the markers~\ding{55} and~\ding{51} respectively. The colors green and red indicate desired and unwanted behavior for the proposal.}
\end{table}

\begin{table}[]
\centering
\tiny
\begin{tabular}{lcccccc}
\hline
\multicolumn{1  }{c}{}                                & C-SMELL & Other Deep Learning\cite{Schroff_2015_CVPR,drlim,wang2019multi,Fastaploss,Nparloss} & Classic Metric Learning~\cite{XXX} \\ \hline
{\bf Scalable} & \blackcheck & \blackcheck & \\
{\bf Latent Space} & \blackcheck & {\bf ?} &  \\
{\bf Interpretability} & \blackcheck &  &  \\
\hline
\end{tabular}
\caption{Main features for the deep metric learning proposals used in this article. The presence or absence of a given characteristic is represented by the marker \blackcheck. {\bf ?} indicates that it depends on the specific method. Scalability here is determined the the training time. Non-scalable methods require a cubic training time in the number of features. Latent Space indicates whether the method explores similarities in a latent feature space. Finally, interpretability is achieved via our markers as we show in Section~\ref{XXXX}.}
\end{table}
\fi

This paper is organized as follows: 
Section~\ref{sec:related} presents the related works to distance metric learning;
Section~\ref{sec:notations} describes some notations and a background review for the good understanding our proposal;  Section~\ref{sec:our} describes our proposal; 
Section~\ref{sec:experiments} describes the experimental setup used to analyze the data; 
Section~\ref{sec:discussion} presents the main results and discussions and Section~\ref{sec:conclusion} concludes.

\section{Related work}
\label{sec:related}


In the distance metric learning task, prior research usually assumes that the datasets are represented by an incomplete set of features (i.e., we can never collect all the features of an object). This subset of features may not thoroughly inform the semantics of the data space. Thus, the objective is to learn a similarity matrix that encodes how these features should be combined to compute distances best.

One of the first successful cases to solve this problem was learning the linear matrix (Mahalanobis) metric to find a new representation in the feature space~\cite{Xing:2002:DML:2968618.2968683, Globerson:2005:MLC:2976248.2976305, KAN2020107086}. This paradigm requires the decomposition of eigenvalues, an operation that is cubic in the dataset dimensionality (i.e., number of features). This issue severely impacts the training time. Also, approaches like this one are limited to similarity matrices, which encode linear combinations of features. 

Other approaches proposed techniques based on Information Theory to tackle the distance metric learning problem~\cite{788121,NGUYEN2017215,Davis:2007:IML:1273496.1273523}. 
These works start from a reference distribution to train distance functions based on divergences (e.g., Kullback-Leibler or Jeffrey) to obtain reference probability distributions of the data. Through this reference distribution, the authors estimate the similarity. 
These methods usually suffer from convergence issues~\cite{Davis:2007:IML:1273496.1273523} when optimizing.


In kernel-based methods for distance metric learning, the input data is usually transformed into a higher dimensional space. The algorithm learns object similarities using the new space obtained from the kernel function~\cite{788121, NIPS2006_3088,7502076, 8617698}. These methods also suffer from a cubic computation cost (on the number of features) or suffer from convergence issues, limiting its applicability due to training time.


In the context of Deep Learning methods, a typical family of Deep Neural Network models that learns distance metrics is the Siamese Neural Networks (SNNs). One of the first works using this approach can be seen in~\cite{bromley1994signature}, where the authors propose a model composed of two neural networks that share their weights among themselves. This architecture was initially proposed for the signature verification problem. 

Neural Networks (NNs) seek to find nonlinear similarities between comparable data examples by extracting a feature vector representing the difference between the data examples. There are several works in the context of NNs developed for different applications~\cite{chopra2005learning, koch2015siamese, Shen:2017:DSN:3123266.3123452,geosciencepaper, ijcai2019-745, WU2020}. They are easily scalable (do not suffer from the cubic cost as before), as they do not explore eigenvalues decomposition. NNs are typically optimized with functions that consider pairs of inputs, called pairwise loss function, and these proposals tend to find a new representation of the data. Therefore, a similarity function is defined in this new representation (for example, Euclidean). 

More recent works present deep metric learning with contrastive loss~\cite{drlim,1467314} and triplet loss~\cite{Schroff_2015_CVPR}. Even showing promising results, these proposals present some issues, such as slow convergence and poor local optima, optimizing the model a challenging task. Contrastive embedding is highly dependent on the quality of the representation of the training data. The training set must contain real-valued precision for pairwise samples. This consideration is typically difficult to satisfy, which is usually not available in practice~\cite{Wang_2017_ICCV}. For the triplet model, the loss function defines an inequality between positive and negative examples for a given anchor example. These methods suffer from what is called the hard negative problem~\cite{wu2017sampling, NGUYEN2020209}. Here, some specific negative examples deteriorate the quality of the model, making the training unstable~\cite{Cui_2016_CVPR}. Hard negative data mining is a proposal to work around this problem. However, the computational cost of searching for these examples becomes high. In addition, it is unclear what defines ``good'' hard triplets~\cite{10.1007/978-3-319-46448-0_44}.

Recent work, including N-pair loss~\cite{Nparloss}, Lifted Structure~\cite{Song_2016_CVPR}, and the Multi-Similarity Loss~\cite{Wang_2019_CVPR} propose strategies to capture relationships within a mini-batch selection. Typically, these strategies consider a weight function that associates the pairs of elements in the loss calculation.  Nevertheless, these works are based on distance measurements between pairs of similar and dissimilar objects in the space found by the neural network.


The methods mentioned in this section indicate the feasibility of learning a similarity function from the input data. Some of these methods inspire the present work~\cite{bromley1994signature,maaten2008visualizing}; for instance, we use a Neural Networks to extract the data representation and the t-student kernel distribution to create a similarity metric. 

However, we devised a novel deep metric learning method differently from the literature using a new representation space (S-space) obtained through autoencoders. As defined herein, the S-space helps the convergence of the proposal and, thanks to the possibility of having multiple markers to represent similar objects, it models even complex spaces such as noncontinuous spaces where similar objects lie in disjoints regions. Therefore, we propose a new similarity space that helps the learning of autoencoders. Unlike pairwise loss, our proposal does not require any specific sample selection strategy.

\section{Background and notation}
\label{sec:notations}
 
In SMELL, we map pairwise input data into a latent space and a Similarity space. In this section, we provide some technical background about data representation with autoencoders and a mathematical notation essential to the proposed method understanding.


Throughout the paper, we apply the following notation. We denote vectors by boldface lowercase letters, such as $\bm{x}$, $\bm{z}$ and $\bm \mu$; all scalars by lowercase letters, such as $m$ and $n$; sets of parameters by greek uppercase letters, such as $ \Theta$ and $\Sigma$; and sets by calligraphic uppercase letters, such as $\mathcal{X}$ and  $\mathcal{Z}$. The zero-mean normal distribution will be denoted by  $\mathcal N(\mu=0, \sigma)$. Table ~\ref{tab:resumo_var} summarizes this notation. 

 \begin{table}
\centering
\small
\begin{tabular}{cc}
\hline
Notation & Description \\ \hline
$\mathcal{X}$       &   input data  examples set \\ 
$\bm x_i$       &   m-dimensional single element in $\mathcal{X}$           \\
$\mathcal{Y}$       &    Label set for set $\mathcal{X}$ \\
$y_i$       &    single element in $\mathcal{Y}$         \\
$\mathcal{Z}$       &  Latent Feature Space from $\mathcal{X}$\\
$\bm z_i$       &   n-dimensional single element in $\mathcal{Z}$      \\
$f_\Theta$       &    Encoder function        \\
$\Theta$       &    set of weights for encoder         \\
$f_{\Theta'}$       &    Decoder function          \\
$\Theta'$       &    set of weights for decoder        \\
$l$       &     label function for a element in set $\mathcal{X}$         \\
$l'$       &    label function for a element in set $\mathcal{Z}$     \\ 
$\mathcal{S}$       &    The \textit{similarity space} from $\mathcal{Z}$       \\
$\bm s_{ij}$       &    n-dimensional single element in $\mathcal{S}$  \\
$\mathcal{M}$       &    The \textit{markers set},  subset of $S$       \\
$\mu_i$      &     n-dimensional single element in $\mathcal{M}$     \\
$f^S$       &    Function that maps a pair in $\mathcal{X}$ to an element in $\mathcal{S}$       \\
$\psi$       &    The similarity function       \\
$\Sigma$       &    Set of parameters of $\psi$ ($\Theta$, $\Theta'$ and $\mathcal{M}$)          \\

\hline
\end{tabular}
\caption{Notation used in this article.}
\label{tab:resumo_var}
\end{table}

 


Let the set $\mathcal{X} = {\{\bm{x}_i\}}_{i = 1}^v$, with $\bm{x}_i \in \mathbb{R}^m$, be $v$ data examples defined in an $m$-dimensional feature space. For each $\bm{x}_i \in \mathcal{X}$ there is an associated label $y_i \in \mathcal{Y} = {\{y_i\}}_{i = 1}^v$, where $y_i \in \{1,...,b\}$. In this way, the pair $(\bm x_i,y_i)$ indicates which of $b$ classes a input $\bm x_i$ belongs to. In a supervised Machine Learning classification problem, we seek to find a function $l:\mathcal{X} \rightarrow \mathcal{Y}$ that maps an unlabeled example $\bm x_i$ into their respective label $y_i$. 

To develop the proposed work, we introduce here some important definitions:

\theoremstyle{definition}
\begin{definition}{(\textit{The latent feature space})}
\label{def:latentspace} 
Consider the set $ \mathcal {X} $ as the original feature space and the representation function $ f_\Theta:\mathcal{X} \xrightarrow{} \mathcal{Z}$, in which $f_\Theta(\bm x_i) = \bm z_i \Longrightarrow l(\bm x_i) = l'(\bm z_i) = y_i$ and the function $l': \mathcal{Z} \xrightarrow{} \mathcal{Y}$, which maps the latent data into their respective labels. We can defined the representation space  $\mathcal{Z}$ called \textit{latent feature space} from $\mathcal{X}$ as
$\mathcal{Z} = \{\bm z_i\}_{i=1}^v, \text{ with } \bm z_i \in \mathbb{R}^n.$
\end{definition}





An autoencoder is a Neural Network trained to attempt to copy a data input to its output. It can be seen as consisting of two parts: an encoder and a decoder that produces an input-based reconstruction~\cite{Goodfellow-et-al-2016}.  
%
An encoder is a representation learning algorithm that seeks to find a representation function $f_ \Theta:\mathcal{X} \xrightarrow{} \mathcal{Z}$ for a set of weights $ \Theta$
that maps the set~$\mathcal{X}$ to the \textit{latent feature space}~$\mathcal{Z}$.

Similarly, the decoder function can be defined as the inverse encoder function $ f^{-1}_{ \Theta'}: \mathcal{Z} \xrightarrow{} \mathcal{X} $ where $ \Theta' $ is a set of weights for the decoder. Autoencoders are trained to minimize reconstruction errors (typically, Mean Squared Errors - MSE), and its training is performed through \textit{Backpropagation} of the error, just like a regular Feedforward Neural Network~\cite{hecht1992theory}.  




A neural network model~\cite{bromley1994signature, wang2019multi} receives a pair of input examples $(\bm x_i, \bm x_j)\in \mathcal{X} \times \mathcal{X}$ and transform each of them to a latent data $(\bm z_i, \bm z_j) \in \mathcal{Z} \times \mathcal{Z}$ through the encoder $f_\Theta$. 
%
%
%

%

In the context of supervised learning, for a data pairwise $(\bm x_i, \bm x_j) \in \mathcal{X} \times \mathcal{X}$,  we say they are similar iff $l(\bm x_i) = l(\bm x_j)$. Analogously, they are dissimilar iff $l(\bm x_i) \neq l (\bm x_j)$.
%
%

\theoremstyle{definition}


\begin{definition}{(\textit{The similarity space})}
\label{def:sspace} The representation space called \textit{Similarity space} (or \textit{S-space}) is a space built from the set $\mathcal{X} \times \mathcal{X}$. So, be the function $f^{S}:{\mathcal{X} \times \mathcal{X}} \rightarrow \mathcal{S}$, the \textit{similarity space} is defined as $\mathcal{S} = \{\bm s_{ij}\} \text{, with } \bm s_{ij} \in S \subset \mathbb{R}^n,$
where if $ l (\bm x_i) = l (\bm x_j) $, then $ \bm s_{ij} $ represents the similarity vector and if $ l (\bm x_i) \neq l(\bm x_j) $, then $ \bm s_{ij} $ represents the dissimilarity vector.
\end{definition}

In this paper, we define the map function $f^{S}: {\mathcal{X} \times \mathcal{X}} \rightarrow \mathcal{S}$ for a pairwise $(\bm x_i,\bm x_j)$ by the following element-wise absolute value operation:

\begin{equation}
\begin{split}
   \bm s_{ij} & = f^{S}(\bm x_i, \bm x_j)\\
              & =  |f_{ \Theta}(\bm x_i) - f_{\Theta}(\bm x_j)|\\ 
              & = |\bm z_i - \bm  z_j| \\
              & = (|z^1_i-z^1_j|, | z^2_i-z^2_j|, ..., | z^n_i- z^n_j|)
\end{split}
\label{eq:elementwise}
\end{equation}

\noindent it is worth noting that since $\bm s_{ij}$  is obtained by an element-wise process, it has the same dimension as $\bm z_i $ and $\bm z_j$, where $z^n_i$ is the n-th feature of the i-th data example in a latent space representation $\mathcal{Z}$ (see Definition~\ref{def:latentspace}). 

\theoremstyle{definition}
\begin{definition}{(\textit{The Markers set})}
\label{def:markersset} In \textit{S-space}, we defined the markers set  $\mathcal{M}  \subset \mathbb{R}^n$ (same space then $\mathcal{S}$) to improve similarity calculations. We define the set  $ \mathcal{M}^+ $ representing the set of markers responsible for quantifying the similarity between the input pairs. Likewise, markers in set $ \mathcal{M}^- $ quantify the dissimilarity.
The Markers set is defined as

\begin{equation}
    \mathcal{M}  = \mathcal{M}^+ \cup  \mathcal{M}^- =  \{\bm \mu_i^+\}_{i=1}^{k} \cup \{\bm \mu_j^-\}_{j=k+1}^{w}.
    \label{eq:markers}
\end{equation}

\end{definition}

Therefore, in this work, we seek to calculate the similarity function \linebreak $\psi_ \Sigma:\mathcal{X}\times\mathcal{X}\xrightarrow{}[0, 1]$.
The parameters of $\psi$ are defined by the set $\Sigma = \{\Theta, \Theta', \mathcal{M}\}$, respectively the weights of encoder, decoder and the Markers set in S-space. SMELL relies in simultaneously learning all elements of $\Sigma$. More details about the proposed method are described in Section \ref{sec:our}.

\section{Supervised Distance Metric learning Encoder with Similarity Space (SMELL)}
\label{sec:our}

Our proposal, namely SMELL, simultaneously optimizes a latent data representation (using a DMeL model) and a similarity function that indicates the similarity of two objects in the learned data S-space. This kind of technique can be useful for a wide variety of applications, such as to feed a predictor (e.g., a classifier) with a new metric learned from the data. This section details our proposal. Figure \ref{fig:sche_our} shows a simple schematic for our proposal.

\begin{figure}
\centering
\includegraphics[width=0.95\linewidth]{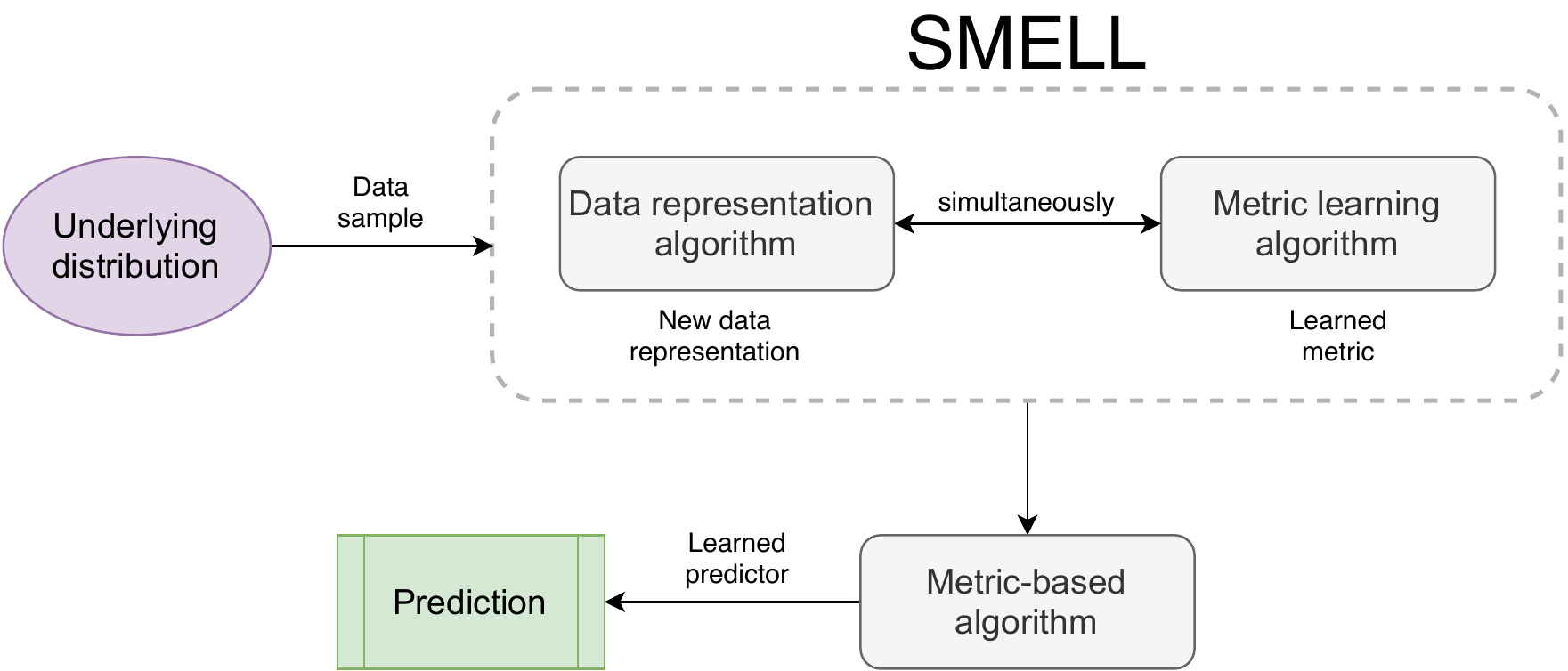}
\caption{Simple black box schematic for our proposal.}
\label{fig:sche_our}
\end{figure}

\subsection{Metric learning algorithm}
\label{subsubsec:prob_def}


There are several ways to find a similarity metric $\psi_\Sigma$~\cite{Wang2012, 7299016}. In this paper, we propose $\psi_\Sigma$ being estimated from the latent representation obtained by the encoder $f_\Theta: \mathcal{X} \xrightarrow{} \mathcal{Z}$.


\subsection{The S-space}
\label{subx4sec:sspace}



As can be seen in in Definition \ref{def:sspace}, we define a new representation space namely S-space $\mathcal{S}$, which quantifies the similarity between pairs of objects. In Equation~\ref{eq:elementwise}, we propose a map function $f^{S}: \mathcal{X} \times \mathcal{X} \rightarrow \mathcal{S}$ for a data pairwise $(\bm x_i,\bm x_j)$ as being an element-wise absolute value operation representing the pairwise difference between the pair of data.
Note, in Equation~\ref{eq:elementwise}, that $\bm s_{ij} \in \mathbb{R}^n$ (same dimension then \textit{latent representation space}).

Regarding the pairwise labeling, we have two options for a given pair $(\bm x_i, \bm x_j)$: similar or dissimilar.
Thus, we define the \textit{Markers set} $\mathcal{M} $  
so that each marker of $\mathcal{M}^+$ or $\mathcal{M}^-$ represents one of these possibilities (see Definition \ref{def:markersset}). The closer the vector $\bm s_{ij}$ is to a marker $\bm \mu^+ \in \mathcal{M}^+$ or $\bm \mu^- \in \mathcal{M}^-$, the greater the probability that the elements of the pair $(\bm x_i,\: \bm x_j)$ are similar or dissimilar to each other, respectively. Then, we have, in this case, $k$ similarity markers and $w-k$ dissimilarity markers for $\mathcal{M}$, and $\mathcal{M}^+ \cap  \mathcal{M}^- = \emptyset $.

Inspired by~\cite{maaten2008visualizing,  xie2016unsupervised, li2018discriminatively} we use the Student’s t-distribution  with one degree of freedom as a kernel to measure the similarity between  $\bm s_{ij}$ and a specific marker $\bm \mu_m \in \mathcal{M}$, as 
\begin{equation}
  q^m_{ij} = \frac{\left(1 + ||\bm s_{ij} - \bm \mu_m ||^2_2\right)^{-1}}{\sum_{\mu_{m'} \in \mathcal{M}} \left(1 + || \bm s_{ij} - \bm \mu_{m'} ||^2_2\right)^{-1}},
 \label{eq:eq1}
\end{equation}
where $q^m_{ij} \in \mathbb{R}$ is the similarity/dissimilarity of 
$\bm s_{ij}$ in relation to the markers $\mu_m$ (it is normalized by the sum of all markers in $\mathcal{M}$). So, we calculate $q^{+}_{ij} = \sum_{p} q^{p}_{ij}$ for all $ \bm \mu_{p} \in \mathcal{M}^+$ and  $q^{-}_{ij} = \sum_{n} q^{n}_{ij}$ for all $ \bm \mu_{n} \in \mathcal{M}^-$. In other words, $q^+_{ij}$ is the probability of $\bm x_i$ have the same label as $\bm x_j$ and $q^-_{ij}$ is the probability of $\bm x_i$ and $\bm x_j$ have different labels. Since $\mathcal{M}^+$ and $\mathcal{M}^-$ are two disjoint sets, we have $q^+_{ij} + q^-_{ij} = 1$.

It is worth noting that, we use a different version of the Deep Metric Learning canonical model.  Thus, we use the representation of the difference vector $\bm s_{ij}$ defined in S-pace. In Section~\ref{subsec:latent} we show more details about this choice. 


\subsection{Loss function and regularization}
\label{subsubsec:loss_fuction}


SMELL relies on simultaneously learning a latent representation of the data (with parameters $\Theta$ and $\Theta'$ for the encoder and decoder functions, respectively) and the positioning of the markers of the set $\mathcal{M}$ in S-space. Therefore, we seek to find the parameters $ \Sigma =  \{\Theta, \Theta', \mathcal{M}\}$ of the function $\psi_\Sigma(\bm x_i,\bm x_j)$ is defined as an optimization problem. Let the cost function be $J(\{\mathcal{X} \times \mathcal{X}\})$, we estimate the optimal parameters set $\Sigma^*$ with Cross-entropy loss $H_c$. We define regularization functions $R_r$ and $R_d$ to avoid overfitting in the training process. In training, the cross-entropy is applied between the output of SMELL and object's classes.

Similarly to~\cite{8237874}, $R_r$ regards to the autoencoder's reconstruction error. In our proposal, for all training pairs ($\bm x_i$, $\bm x_j$) and for all reconstructed pairs ($\bm x_i', \bm x_j'$) we have 
%
    $R_r = r_rN^{-1}\sum_i \sum_j \left(||\bm x_i - \bm x_i'||^2_2 + ||\bm x_j - \bm x_j'||^2_2\right)$,
\noindent where $r_r$ is a constant to calibrate the loss reconstruction function and $N$ is the number of pairs in train the dataset. 



When we use more than one maker as reference points to the similarity/ dissimilarity regions, markers of the same set $\mathcal{M}^+$ (or $\mathcal{M}^-$) tend to group altogether, hidering the efficiency of our method. In this context, we propose a new regularization term $R_d$ we called \textit{Repulsive Regularizer}, to avoid this undesirable behavior. It is defined  as

\begin{equation}
    R_d^+ =\frac{1}{c^+} \left[\sum_{\mu_i \in \bm{M}^+} \sum_{\mu_j \in \bm{M}^+}  \frac{1}{||\mu_i - \mu_j||^2_2+\epsilon}\right],
\label{eq:rd+}
\end{equation}

 \noindent where $\bm \mu_i \neq \bm \mu_j$ and $c^+$ is a constant value defined as $c^+ = {k \choose 2}$,
%
in which $k$ is the number of elements in $\mathcal{M}^+$ (see Definition \ref{def:markersset}). $R_d$ is inversely proportional to the square distance of the markers. To avoid ill-formed problems, we added to the  denominator a corrective term  $\epsilon$ that prevents division by 0. We conducted a manual investigation with a grid search, and we adopted for our experiments $\epsilon = 10^{-3}$. In the same way, we define $R_d^-$, and with that, we have
\begin{equation}
    R_d = r_d(R_d^+ + R_d^-),
\label{eq:rd}
\end{equation}
with a constant value $r_d$ for calibration. Note that $R_d^+ = 0$ if we have a single positive marker $k=1$. In the same way, if we have a single negative marker, $R_d^- = 0$ if $w - k = 1$. 




Let $\mathcal{Q} = \{q_{ij}\}$, the SMELL output, be the set that contains the pairs $ q_{ij}=(q^+_{ij},\: q^-_{ij})$ corresponding to the probability of the elements of a pairwise input $(\bm x_i, \bm x_j)$ be similar or dissimilar, respectively. The optimal hyperparameters set can be defined as $
\Sigma^* = 
\argminA_{\bm \Sigma} J( \{\mathcal{X} \times \mathcal{X}\})$, where 

\begin{equation}
J( \{\mathcal{X} \times \mathcal{X}\}) = H_c (\mathcal{U}||\mathcal{Q})r_{HC} + R_r + R_d,
\label{eq:loss}
\end{equation}
where $r_{HC}$ is a constant for calibration and $\bm u_{ij} \in \mathcal{U} $ is defined as $\bm u_{ij} = (1,0)$ if $i$ has same label as $j$ and $\bm u_{ij} = (0,1)$, otherwise.

\begin{figure*}
\centering
\includegraphics[width=1\linewidth]{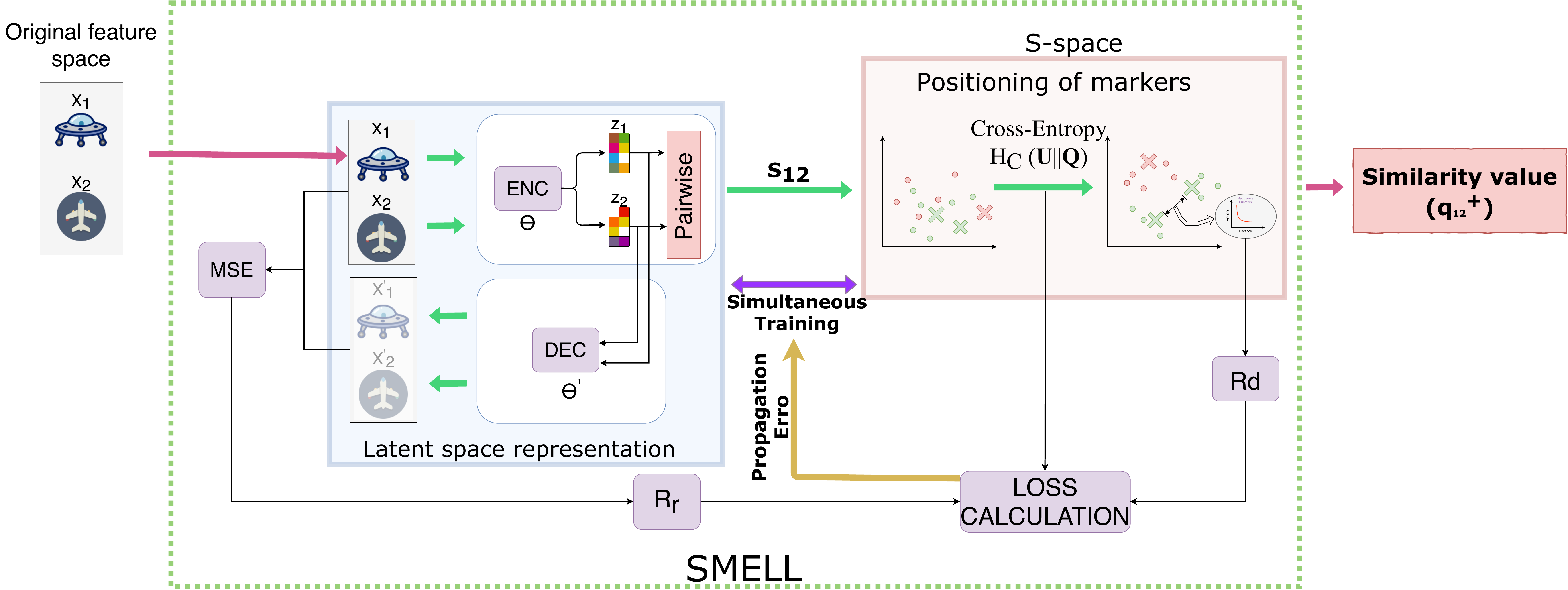}
\caption{The left side represents the Encoder with reconstruction, and the right side represents the optimization process for the markers' position for $\mathcal{M} = \{ \bm \mu_1^+,\: \bm \mu_2^+,\: \bm \mu_3^-\}$. In this example, we used two positives and one negative marker. Green and red crosses represent $ \bm \mu_1^+$, $\bm \mu_2^+$, and $\bm \mu_3^-$, respectively, green and red dots represent the similar and dissimilar input pairs. The rightmost green arrow shows a representation of the markers' position optimization step by using  Cross-Entropy divergence and some regularization functions. 
Observe that the number of positive and negative markers are hyperparameters.}
\label{fig:sche_siam_full}
\end{figure*}

%

SMELL learns all parameters in the set $\Sigma^*$ simultaneously. The representation found in S-space aims at grouping the elements $\bm s_{ij} $ around their respective markers, as defined in Loss Function $J$ (Equation~\ref{eq:loss}). The impact of the attractive behavior is controlled by the constant $r_{HC}$, i.e., the higher the $r_{HC}$, the greater is the tendency to group the points $\bm s_{ij}$ closer to the respective markers. Also, note that the regularization functions operate in different spaces, i.e. $ R_r $ operate in \textit{latent feature space}, $ R_d $ operates in \textit{S-space} and $ H_c $ operates in \textit{latent feature space} and~\textit{S-space} simultaneously.

 Figure~\ref{fig:sche_siam_full} depicts the more detailed schematic of our proposal using a toy example (two positive markers and one negative). Observe that the number of positive and negative markers is a hyperparameter.

\subsection{Optimization}

To find the $\Sigma^*$ set, we use mini-batch stochastic gradient decent (SGD) and backpropagation. First, we note that the decoder weights $\Theta'$ are only affected by the $R_r$ component of the loss function $J$. So, we can use $\partial R_r / \partial  \Theta'$ to update $\Theta'$. Then, given a mini-batch with $g$ samples and learning rate $\lambda$, $\Theta'$ is updated by
\begin{equation}
\Theta' = \Theta' - \frac{\lambda}{g}\sum_{i = 1}^g\frac{\partial R_r}{\partial \Theta'}.
\label{eq:opttheta}
\end{equation}

To optimize the markers, consider that 
\begin{equation}
\bm \mu_{t} = \bm \mu_{t} - \frac{\lambda}{g}\sum_{i = 1}^g\frac{\partial J}{\partial \bm \mu_{t}} = \bm \mu_{t} - \frac{\lambda}{g}\sum_{i = 1}^g\bigg(\frac{\partial L_{HC}}{\partial \bm \mu_{t}} + \frac{\partial R_{d}}{\partial \bm \mu_{t}}\bigg),
\label{eq:optmarkers}
\end{equation}

\noindent where $\dfrac{\partial L_{HC}}{\partial \bm \mu_t}$ can be calculated for a given $\bm \mu_t$ and $\bm s_{ij}$ as

$$
\frac{\partial L_{HC}}{\partial \bm \mu_t} = 2 \frac{(q_{ij}^t- \bm u_{ij})(\bm s_{ij} - \bm \mu_t)}{1 + ||\bm s_{ij} -\bm \mu_t||_2^2},
$$
 and
 $$
\frac{\partial R_d}{\partial \bm \mu_t} = -2\sum_{\bm \mu_s \in M} \bigg[\text{sign}(\bm \mu_s)\frac{||\bm \mu_t - \bm \mu_s||_2}{(||\bm \mu_t - \bm \mu_s||_2^2 + \epsilon)^2}\bigg],
$$
where sign($\bm \mu_s$) = 1 if $\bm \mu_s \neq \bm \mu_t$ and $\bm \mu_s$ has same semantic (similarity or dissimilarity) than $\bm \mu_t$, and sign($\bm \mu_s$) = 0, otherwise. 

For training SMELL, we randomly selected the mini-batch with $ m $ pairs of elements (half are similar, and the other half are dissimilar). Also, our proposal does not have any specific batch selection criteria.

\subsection{Theoretical proprieties}
\label{subsec:theo}

Due to the construction of the S-space, we are able to obtain some theoretical proprieties. 

\begin{definition}\label{ax:ex}


({\it Optimal Latent Space}) Let $\bm x_i, \bm x_j \in \mathcal{X} $ and a latent representation function $f_\Theta: \mathcal{X} \rightarrow \mathcal{Z}$. The transformation $ f_\Theta $ generates an {\it optimal latent space} $\mathcal{Z}$ when the expected value $\mathbb {E}[||\bm s_{ij}||_2] = 0 \Longrightarrow {} l (\bm x_i) = l (\bm x_j).$
\end{definition}

SMELL is able to group points of same class into clusters. It is worth noting that we defined the optimal space as a conditional instead of a biconditional statement. From this definition, we can observe that SMELL may create several different clusters of the same class, as depicted in Figure~\ref{fig:sche_k_optimal}.

\begin{figure}
\centering
\includegraphics[width=0.3\linewidth]{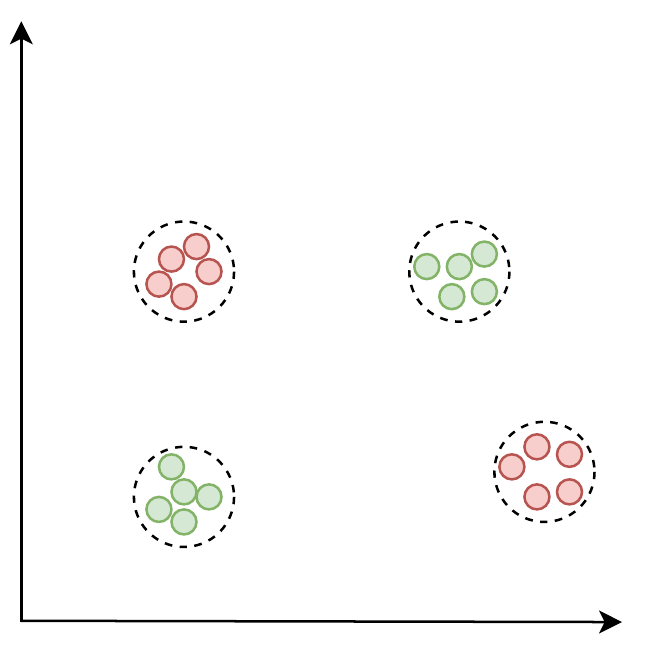}
\caption{Toy example for \textit{optimal latent space}. Same color indicates same label. S-space requires that each group has only elements of same class, but note that, this space can have different groupings with elements of the same class.}
\label{fig:sche_k_optimal}
\end{figure}

\begin{theo}
\label{pro:mod}

In S-space, given $k$ positive markers in the set $ \mathcal{M}^+  $ and $n-k$ negative markers in  $  \mathcal{M}^-$, the latent space found by SMELL, i.e., the estimation of the parameters $\Theta$ of  $f_\Theta$, generates an {\it optimal latent space} if $\exists\; \bm \mu_i \in \mathcal{M}^+$ so that $||\mu_i||_2^2 <  ||\mu_j||_2^2$ for any $\bm \mu_j \in \mathcal{M}^-$.

\end{theo}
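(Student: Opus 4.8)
The plan is to reduce Proposition~\ref{pro:mod} to a statement about the soft assignment of Equation~\ref{eq:eq1} evaluated at the origin of the S-space, and then couple it with optimality of the cross-entropy term in Equation~\ref{eq:loss}. First I would unpack Definition~\ref{ax:ex}: because $\|\bm s_{ij}\|_2 \ge 0$, the condition $\mathbb{E}[\|\bm s_{ij}\|_2]=0$ is equivalent to $\bm s_{ij}=\bm 0$ almost surely, so ``optimal latent space'' is exactly the implication $\bm s_{ij}=\bm 0 \Rightarrow l(\bm x_i)=l(\bm x_j)$. I would then prove the contrapositive: the encoder $f_\Theta$ returned by SMELL never maps a dissimilar pair to the origin.

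The core computation substitutes $\bm s_{ij}=\bm 0$ into Equation~\ref{eq:eq1}, collapsing the kernel to $q^m_{ij}=(1+\|\bm\mu_m\|_2^2)^{-1}/Z$ with $Z=\sum_{m'}(1+\|\bm\mu_{m'}\|_2^2)^{-1}$. Thus the assignment at the origin is controlled entirely by the marker norms, with smaller-norm markers receiving larger weight. The hypothesis that some $\bm\mu_i\in\mathcal{M}^+$ satisfies $\|\bm\mu_i\|_2^2<\|\bm\mu_j\|_2^2$ for every $\bm\mu_j\in\mathcal{M}^-$ places the marker nearest the origin in $\mathcal{M}^+$; the aim of this step is to turn that geometric fact into the strict inequality $q^+_{ij}(\bm 0)>q^-_{ij}(\bm 0)$, i.e.\ the origin lies strictly inside the similarity region.

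With the origin certified as a positive region, I would invoke optimality of $\Theta$. At a minimiser of $J$ the cross-entropy $H_c(\mathcal{U}\|\mathcal{Q})$ drives $q^-_{ij}\to 1$ for every dissimilar pair, whose target is $\bm u_{ij}=(0,1)$. If such a pair were mapped to $\bm s_{ij}=\bm 0$, then by the previous step $q^-_{ij}(\bm 0)<q^+_{ij}(\bm 0)$, so the pair is misclassified and contributes strictly positive loss; perturbing $\bm s_{ij}$ towards the nearest negative marker strictly increases $q^-_{ij}$ and lowers $H_c$, contradicting optimality. Hence at the learned $\Theta$ no dissimilar pair sits at the origin, which is the contrapositive, and the space is optimal in the sense of Definition~\ref{ax:ex}.

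The main obstacle is the inequality $q^+_{ij}(\bm 0)>q^-_{ij}(\bm 0)$ in the second paragraph. Since the $t$-Student assignment sums over \emph{all} markers, a single positive marker closer to the origin than each negative marker does not automatically dominate the aggregate negative mass when $\mathcal{M}^-$ is large, so this comparison is the technical crux. I would resolve it by using the repulsive regularizer $R_d$ of Equation~\ref{eq:rd} to control the spread of same-class markers and bound $q^-_{ij}(\bm 0)$ from above, or alternatively by recasting the origin's label through its single nearest marker and arguing that the cross-entropy minimiser inherits the nearest-marker decision. A secondary point to make rigorous is that a sufficiently expressive encoder can reposition an individual pair's $\bm s_{ij}$ without disturbing the others, which is what justifies the local perturbation argument.
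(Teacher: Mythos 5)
Your plan shares the paper's central computation---set $\bm s_{ij}=\bm 0$ in Equation~\ref{eq:eq1} so that the soft assignment at the origin depends only on the marker norms, and then compare $\sum_{\bm\mu\in\mathcal{M}^+}(1+\|\bm\mu\|_2^2)^{-1}$ against $\sum_{\bm\mu\in\mathcal{M}^-}(1+\|\bm\mu\|_2^2)^{-1}$---but two things separate it from the paper's argument. First, the direction: you prove the literal ``if'' (marker condition $\Rightarrow$ optimal space, via contrapositive plus a perturbation argument at the loss minimiser), whereas the paper's proof starts by assuming the space is optimal, requires $q^+_{ij}\to 1$ at the origin, and \emph{derives} a norm inequality between markers; the sentence immediately after the proposition (``if SMELL finds an optimal latent space, at least one positive marker has a smaller norm\ldots'') confirms that the intended reading is the converse of what the statement's ``if'' literally says. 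Your optimality-of-$H_c$ and encoder-expressiveness machinery has no counterpart in the paper, which simply posits $q^+_{ij}=1\Leftrightarrow l(\bm x_i)=l(\bm x_j)$ at $\Sigma^*$.

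Second, and more importantly, the step you yourself flag as the crux---deducing $q^+_{ij}(\bm 0)>q^-_{ij}(\bm 0)$ from the existence of a single positive marker of minimal norm---is left unproven, and it genuinely does not follow from the stated hypothesis: if $\mathcal{M}^-$ contains many markers each with norm only slightly larger than that of $\bm\mu_i$, the negative sum dominates and the origin is classified as dissimilar. The paper closes this hole not via the repulsive regularizer (your first suggested repair) but by bounding each sum by its extremal term, $\sum_{\bm\mu\in\mathcal{M}^+}(1+\|\bm\mu\|_2^2)^{-1}\le k\,(1+\|\bm\mu^+\|_2^2)^{-1}$ with $\bm\mu^+$ the smallest-norm positive marker and $\sum_{\bm\mu\in\mathcal{M}^-}(1+\|\bm\mu\|_2^2)^{-1}\ge (n-k)(1+\|\bm\mu^-\|_2^2)^{-1}$ with $\bm\mu^-$ the largest-norm negative marker, and then explicitly invoking the extra assumption that the positive and negative marker counts are comparable ($k\approx n-k$, justified by the hyperparameter choice in Section~\ref{subsec:initialization}). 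Without importing that balanced-count assumption, or actually carrying out one of the two repairs you only sketch, the argument does not go through; as written, the proposal has a real gap at exactly the point you identified.
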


\begin{proof}
The proof for this proposition can be found in~\ref{sec:appendix}.
\end{proof}

From Proposition~\ref{pro:mod}, if SMELL finds a \textit{optimal latent space}, at least one positive marker has a smaller norm than the negative marker. In addition, in practice, as we can see in the Section~\ref{sec:discussion}, at least one positive marker is smaller than all negatives markers (the positive marker is located near the origin). We observed that the model builds a latent space of groups with  elements of the same class, similar to the Figure~\ref{fig:sche_k_optimal}.

\begin{theo}
\label{theo:risk}
For S-spaces built with one marker in each group, $\mu^+ \in M^+$ and $\mu^-\in M^-$, $D^-$ and $D^+$ being the Euclidean distance of an object to the negative and positive marker, respectively, the misclassification risk function of a positive marker is \begin{multline*}
R^+ = \frac{(D^-)^2+1}{\sqrt{(D^-)^2+2}}\Bigg[\left(\sqrt{(D^-)^2+2}\right)log\left(\frac{1}{\sqrt{\frac{(D^+)^2}{(D^-)^2+2}+1}}\right) -\\- \frac{((D^-)^2+1)\left[tan^{-1}\left( \frac{(D^+)}{\sqrt{(D^-)^2+2}} \right)\right]^2}{\sqrt{(D^-)^2+2}} +\\+ (D^+)tan^{-1}\left(\frac{(D^+)^2}{\sqrt{(D^-)^2+2}}\right)\Bigg].
\end{multline*}.

\end{theo}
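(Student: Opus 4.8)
The plan is to specialize the normalized responsibilities of Equation~\ref{eq:eq1} to the single-marker regime and then evaluate an expected-loss integral in closed form. First I would write, for a pair whose similarity vector $\bm s_{ij}$ sits at Euclidean distance $D^+$ from the unique positive marker and $D^-$ from the unique negative marker, the two t-student responsibilities obtained by clearing the $(1+\cdot)^{-1}$ factors in Equation~\ref{eq:eq1}: $q^+ = \frac{1 + (D^-)^2}{2 + (D^+)^2 + (D^-)^2}$ and $q^- = 1-q^+ = \frac{1 + (D^+)^2}{2 + (D^+)^2 + (D^-)^2}$. The cross-entropy contribution incurred by a genuinely positive pair is then $-\log q^+ = \log\!\big(2 + (D^+)^2 + (D^-)^2\big) - \log\!\big(1 + (D^-)^2\big)$, and it is the expectation of this quantity that defines the risk $R^+$. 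Note already that the constant $2 + (D^-)^2$ is forced to appear inside every denominator, which is what will make $\sqrt{(D^-)^2+2}$ the natural scale of the final answer.

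Next I would define $R^+$ as the expected misclassification (cross-entropy) loss of a positive object, taking the expectation against the Student-$t$ kernel (Cauchy, one degree of freedom) centred at the positive marker that governs where positive similarity vectors fall, and using radial symmetry to reduce it to a one-dimensional integral over the radius $r=D^+$, with $D^-$ held fixed as a configuration parameter. This produces an integral of the form $R^+ \propto \int_0^{D^+} \big(-\log q^+(r)\big)\,K(r)\,dr$, where $q^+(r)$ is $q^+$ with $D^+$ replaced by the integration variable $r$ and $K(r)$ is the (appropriately normalized) kernel weight $\big(2 + (D^-)^2 + r^2\big)^{-1}$; the latter is exactly $q^+/(1+(D^-)^2)$ and is the source of the pervasive $\sqrt{(D^-)^2+2}$ scale.

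I would then evaluate this integral after the rescaling $r \mapsto r/\sqrt{(D^-)^2+2}$, which turns each denominator $2 + (D^-)^2 + r^2$ into a multiple of $1 + \big(r/\sqrt{(D^-)^2+2}\big)^2$. The purely logarithmic piece integrates elementarily, since $\int_0^{D^+}\frac{r\,dr}{r^2+(D^-)^2+2} = \tfrac12\log\frac{(D^+)^2+(D^-)^2+2}{(D^-)^2+2}$, which is precisely the $\log\!\big(1/\sqrt{(D^+)^2/((D^-)^2+2)+1}\big)$ factor in the statement. The mixed term is then handled by integration by parts with $dv = \frac{dr}{2+(D^-)^2+r^2}$, whose antiderivative is $\frac{1}{\sqrt{(D^-)^2+2}}\arctan\!\big(r/\sqrt{(D^-)^2+2}\big)$; differentiating the remaining logarithmic factor reintroduces the same arctangent, so that recognizing $\frac{d}{dr}\big[\arctan(r/\sqrt{(D^-)^2+2})\big]^2$ collapses the leftover integral into the squared-arctangent contribution. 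Collecting the boundary term, the logarithmic term, and this arctangent-squared term, and factoring out $\frac{(D^-)^2+1}{\sqrt{(D^-)^2+2}}$, reproduces the stated expression.

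The step I expect to be the main obstacle is exactly this integration-by-parts bookkeeping for the mixed $\log \times$ kernel term: it is there that the $\big[\arctan(\cdot)\big]^2$ factor and the precise coefficients $\frac{(D^-)^2+1}{(D^-)^2+2}$ are generated, and a single sign or scale slip in the rescaling propagates into every term of the final formula. I would guard against this by checking the two sanity limits before trusting the closed form: $R^+ \to 0$ as $D^+ \to 0$, since a positive pair sitting on its own marker should incur vanishing risk, and monotone growth of $R^+$ in $D^+$, since moving a positive pair away from its marker can only increase the expected misclassification loss.
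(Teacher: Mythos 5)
Your computation of the single--marker responsibilities $q^+ = \frac{1+(D^-)^2}{2+(D^+)^2+(D^-)^2}$ and $q^- = 1-q^+$ matches the paper exactly, and you correctly identify $\sqrt{(D^-)^2+2}$ as the natural scale. But the risk you integrate is not the one the proposition is about, and your integral would not evaluate to the stated formula. The paper defines $R^+$ following the Histogram Loss construction: the probability that a random negative pair appears more similar than a random positive pair, i.e.\ $R^+=\int_0^{D^+} q^-(r)\,\Phi(r)\,dr$ with $\Phi(r)=\int_0^r q^+(y)\,dy = \frac{((D^-)^2+1)\tan^{-1}\bigl(r/\sqrt{(D^-)^2+2}\bigr)}{\sqrt{(D^-)^2+2}}$. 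The integrand is then a rational function times an arctangent, and the two pieces $\int \tan^{-1}(r/b)\,dr$ and $\int\frac{\tan^{-1}(r/b)}{b^2+r^2}\,dr=\frac{1}{2b}\bigl[\tan^{-1}(r/b)\bigr]^2$ are elementary and produce precisely the $\log$, $\bigl[\tan^{-1}\bigr]^2$, and $D^+\tan^{-1}$ terms of the statement. You instead take $R^+$ to be the expected cross-entropy $\mathbb{E}[-\log q^+]$ against the kernel $K(r)=(2+(D^-)^2+r^2)^{-1}$, which is a different functional.

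That substitution is fatal to the calculation, not just to the interpretation. The integral $\int_0^{D^+}\frac{\log(b^2+r^2)}{b^2+r^2}\,dr$ (with $b^2=(D^-)^2+2$) has no elementary antiderivative: substituting $r=b\tan\theta$ turns it into $\frac{1}{b}\int\bigl(2\log b - 2\log\cos\theta\bigr)\,d\theta$, whose second piece is a Lobachevsky/Clausen-type function. Your claim that the logarithmic piece "integrates elementarily" silently inserts a factor of $r$ into the measure ($\int\frac{r\,dr}{r^2+b^2}$ is not $\int\frac{dr}{r^2+b^2}$), and your integration-by-parts step asserts that "differentiating the remaining logarithmic factor reintroduces the same arctangent" --- it does not: $\frac{d}{dr}\log(b^2+r^2)=\frac{2r}{b^2+r^2}$ is rational, so the $\bigl[\tan^{-1}\bigr]^2$ term never appears. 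The squared arctangent in the target formula can only arise from an arctangent already present in the integrand, which is exactly what the CDF $\Phi$ supplies in the paper's route. To repair the proof you need to start from the rank-based risk $\int q^-\,\Phi$, not from the expected log-loss.
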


\begin{proof}
The proof can be found in~\ref{sec:appendix}.
\end{proof}

Due to the S-space formulation, we obtain the probability of a pair being similar analytically, given the distance of that pair to the positive marker (typically this probability is estimated, as we can see in~\cite{hist_loss}).

\section{Experimental setup}
\label{sec:experiments}

We conducted an extensive set of experiments in several scenarios with diffe\-rent setups to understand SMELL behavior and effectiveness better. 
Section~\ref{subsec:database} describes the datasets we have employed. 
Section~\ref{subsec:evaluation_def} details the classification protocol designed to evaluate our method and the baselines.
Section~\ref{subsec:initialization} discusses the initialization and the architecture of the proposed approach.

\subsection{Dataset}
\label{subsec:database}

\subsubsection{General purpose datasets}

KEEL~\cite{keel} is an open source\footnote{\url{http://keel.es/datasets.php}} Java software tool that can be used for a large number of different knowledge data discovery tasks. We used 28 datasets provided by KEEL to evaluate our proposal. All datasets are numeric and have no elements missing. Furthermore, all datasets have been min-max normalized to the interval $[0, 1]$, a precondition to the experiments' execution.

There is a wide variety of data in KEEL. The 28 datasets used in our experiments are divided into Medical data (Bupa, Cleveland, Appendicitis, Newthyroid, Pima, Wdbc, Wisconsin, and Thyroid); natural Language Processing data (Vowel, Letter, and Phoneme); experimental psychological data (Balance); feature-based image data (Magic, and Satimage); hierarchical decision-making data (Monk-2, Ring, and Twonorm); nature data (Iris and Banana); disaster prediction data (Titanic); Weather data (Ionosphere); chemical data (Glass, Wine, Winequality-red); and Object/shape recognition (Sonar, Movement\_libras, and  Vehicle).

These 28 datasets have substantial diversity in terms of data factors: the number of examples, the number of features, and the number of classes. Specifically, the number of examples ranges from 106 to 2003, and the number of features ranges from 2 to 90. The datasets contain both binary and multiple class datasets with a maximum of 26 classes for one dataset. 

Although our method scales up to large datasets, some methods do not; hence, due to a large number of datasets, we downsampled some of them (the ones with more than 1000 samples) to 10\% of the original size. 
The characteristics of datasets are described in  Table~\ref{tab:db}.

\begin{table}
\centering
\small
\begin{tabular}{cccc}
\hline
Dataset & \#Examples & \#Features & \#Classes \\ \hline
\rowcolor{mygray} Appendicitis & 106 & 7 & 2 \\ 
Balance        &    625           &    4            &  3             \\ 
\rowcolor{mygray}Banana (10\%)      &    530           &    2            &  2             \\ 
Bupa & 345 & 6 & 2 \\ 
\rowcolor{mygray}Cleveland & 297 & 13 & 5 \\ 
Glass        &    214           &    9            &  7             \\
\rowcolor{mygray} Ionosphere & 351 & 33 & 2 \\ 
Iris    & 150           & 4              & 3             \\
\rowcolor{mygray} Letter (10\%) & 2003 & 16 & 26 \\
Magic (10\%)      &    1902           &    10            &  2             \\ 
\rowcolor{mygray} Monk-2       &    432           &    6            &  2             \\ 
Movement-libras       &    360           &    90            &  15             \\ 
\rowcolor{mygray} Newthyroid & 215 & 5 & 3 \\ 
Phoneme (10\%)      &   541        &       5         &          2     \\ 
\rowcolor{mygray} Pima & 768 & 8 & 2 \\  
Ring (10\%) & 740 & 20 & 2 \\
\rowcolor{mygray} Satimage (10\%) & 643 & 36 & 7 \\
Segment (10\%)      &  231         &       19         &        7       \\ 
\rowcolor{mygray} Sonar       &    208           &    60            &  2             \\ 
 Thyroid (10\%) & 720 & 21 & 3 \\
\rowcolor{mygray}Titanic (10\%) & 221 & 3 & 2 \\
Twonorm (10\%) & 683 & 20 & 2 \\
\rowcolor{mygray} Vehicle       &    846           &    18            &  4             \\ 
Vowel & 990 & 13 & 11 \\
\rowcolor{mygray} Wdbc       &    569           &    30            &  2             \\ 
Wine & 176 & 13 & 3 \\ 
\rowcolor{mygray}Winequal-red (10\%) & 160 & 11 & 11 \\
 Wisconsin & 683 & 9 & 2 \\
\hline
\end{tabular}
\caption{Datasets used in our experimets.}
\label{tab:db}
\end{table}

\subsubsection{The MNIST dataset}


The MNIST dataset\footnote{\url{http://yann.lecun.com/exdb/mnist/}} is one of the most common datasets used for image classification and accessible from many different sources. The data set consists of grayscale images with 28x28 dimensions. Following~\cite{drlim}, the training set is built from all hand-written digits 4 and 9 from the MNIST dataset.

Due to a large number of MNIST features, the spatial correlation found in the images, and a large number of samples, we consider the dataset suitable for this evaluation. All images were normalized to the interval $[0,1]$, resulting in 6958 and 6824 images corresponding to hand-written digits 4 and 9, respectively.

\subsection{Network evaluation}
\label{subsec:evaluation_def}

To evaluate all metric learning techniques assessed in this work, including our approach, we apply a \textit{K-Nearest Neighbor} (KNN) classifier, with three neighbors, in agreement with~\cite{Deng:2016:EKC:2937259.2937570}. The KNN classification performance can often be significantly improved through (supervised) metric learning. In this work, the KNN classification can be exchanged for any other algorithm that uses a metric.

Since we used several datasets to validate our proposal, we divided our assessment into two approaches. The first is an individual evaluation for each dataset, and the second is a general evaluation for all datasets. 


For each dataset, we calculate the accuracy. For all datasets (except the MNIST), we calculate the average accuracy (Accuracy\_AVG), the average rank position value (Ranking\_AVG), and the difference of the accuracy average for the best proposal (Diff\_AVG). We also calculated the number of times our algorithm was in the first position (\# of 1$^{\text{st}}$). For the MNIST dataset, we evaluated the proposals' accuracy for different \textit{latent space representation} dimensions.

We used 10-fold cross-validation. This validation can largely retain heterogeneous distributions in the training set and improve statistical confidence in the results. For the sake of reproducibility, our proposal is publicly available on a Gitlab repository\footnote{https://gitlab.com/sufex00/smell}.

 \subsection{Parameters initialization and network architecture}
 \label{subsec:initialization}

We initialize all weights of the autoencoder layers from a zero-mean normal distribution $\mathcal N(\mu=0, \sigma=0.01)$. Biases were also initialized as outcomes of a normal distribution $\mathcal N(\mu=0.5, \sigma=0.01)$, following~\cite{koch2015siamese}. Markers position are initialized with Lloyd's algorithm~\cite{lloyd}.
Furthermore, we pre-trained an autoencoder (without markers) and further transfer the learn to the complete model (with markers) to improve the convergence speed.
%

The encoder of all deep metric learning approaches used as baseline is identical to the one we used in SMELL. According to~\cite{xie2016unsupervised}, we set network dimensions to $m$-512-512-2048-$n$ for all datasets, where $m$ is the number of features of the input data, and $n$ is the \textit{latent space representation} dimension. All layers are fully connected, and we used as activation function the Rectified Linear Unit (ReLU)~\cite{Nair:2010:RLU:3104322.3104425}. 

In addition, we used mini-batch Stochastic Gradient Descent (SGD) where learning rate is $0.01$ and momentum is $0.9$. All parameters previously mentioned (except for the calibration of the markers) were used in all deep metric learning baseline and our proposal. 

Since the optimization model depends on some hyperparameters $(r_{HC}$,  $r_d$, $r_r$, $w$, $k)$, we performed an investigation to determine which value of these variables would maximize the model accuracy. Therefore, we randomly chose the Vehicle dataset to train the model and  select the hyperparameters.

In \cite{MOCKUS1975428}, the author proposed a method called Bayesian Optimization, which consists of optimizing functions such as a ``black box''.
The method consists of, with some known points, determining the shape of the function by regression. Usually, this prediction is made through a Gaussian process due to some characteristics (scalable to a few points and not parametric). Based on the regression of the Gaussian process, it is defined a utility function that consists of finding the next candidate for the parameters aiming at the optimization of some specific metric. 

Because some hyperparameters are defined in a discrete interval, such as the number of markers, it was necessary to perform the discretization of the Bayesian Optimization values. We used five random starting points, and then 20 rounds of the algorithm, where we found $r_{HC} = 1$ , $r_d = 10^{-1}$, $r_r = 10^{-3}$, similarity markers $k = 3$, dissimilarity markers $w-k = 2$, these values were used in the rest of this work. We realized that our proposal typically performs well when $ k $ has a value similar to $ w-k $.

We configured all baselines with the hyperparameters recommended in their original articles.These parameters are listed below. Observe that three approaches (Euclidean, NCA and NPair) do not have hyperparameters to set.

\begin{itemize}
    \item {Metric Learning algorithm}

\begin{list}{--}{}

    \item ANMM~\cite{anmm}: The size of the homogeneous and heterogeneous neighborhoods for each data point is set to 10; 
    
    \item KDMLMJ~\cite{NGUYEN2017215}: Let $k_1$, $k_2$ denote the number of neighbors for constructing the positive and negative difference spaces, we used $k_1=k_2 = 5$.

\end{list}

\item {Deep Metric Learning algorithm}

\begin{list}{--}{}
    \item ContrastiveLoss~\cite{drlim}: The margin term equals 1;
    \item Triplet~\cite{Schroff_2015_CVPR}: The margin term equals 0.2;
    \item MultiSimilarityLoss~\cite{Wang_2019_CVPR}: The hyper-parameters for the model are : $\alpha= 2$, $\lambda= 1$, $\beta= 50$;
    \item FastAPLoss~\cite{Fastaploss}: The number of soft histogram bins for calculating average precision is 10.
\end{list}
\end{itemize}
\section{Results and Discussion}
\label{sec:discussion}


In this section, we present the results of the SMELL's assessment. We also discuss the interpretability of the similarity space (S-space) and conduct a performance evaluation comparing SMELL with three distance metric learning approaches from pyDML\footnote{https://pydml.readthedocs.io/en/latest/index.html}~\cite{anmm, NIPS2004_2566, NGUYEN2017215}, five deep metric learning approaches~\cite{drlim, wang2019multi, Schroff_2015_CVPR, Nparloss, Fastaploss}, and Euclidean distance.
\subsection{Ablation Study}
\label{sec:ablation}

For a better understanding of our proposal, we conducted an ablation study. Therefore, we evaluated SMELL for different regularization calibration values. We evaluated SMELL with and without the reconstruction error ($ r_r = 0 $), with and without the repulsive error ($ r_d = 0 $), and without both ($ r_d = r_r = 0 $). The other default values adopted in our experiments, can be found in Section~\ref{sec:experiments}.

Besides, we also evaluated the behavior of our proposal when using S-space only for training. We then use for prediction a version of SMELL without the S-space (using Euclidean distance), we named this approach SMELL (Euclidean). 
Therefore, after training the model using S-space, we observe only the latent space to perform the similarity metric's extraction, i.e., we consider that the similarity between two objects is the Euclidean distance between them in the latent space. It is also worth noting that we use the same default values adopted in our experiments (without any restriction on $r_d$ and $r_r$). A summary of results is in Table~\ref{tab:ablation}. We also provide a complete report of our results in Table~\ref{tab:ablation_full} (\ref{sec:appendix_ablation}).

We see that among the usual SMELL methods when we take $ r_d = 0 $, the proposal tends to have performance degradation. This behavior is easily seen in the dataset ring (see Table~\ref{tab:ablation_full} in \ref{sec:appendix_ablation}), in which SMELL ($ r_d = 0 $) and SMELL ($ r_d = r_r = 0 $ ) has an accuracy of 0.6536 and 0.6610, respectively. Comparing this value with the best result, we have a difference of more than 20\%. The proposal with $ r_d = 0 $ has the worst performance in all four metrics analyzed (excluding Euclidean).

\begin{table}[]
\centering
\small
\begin{tabular}{p{3.4cm}
>{\columncolor{mygray}}c c
>{\columncolor{mygray}}c c}
\hline
    Propose & Accuracy\_AVG  & Ranking\_AVG  & Diff\_AVG & \# of 1$^{\text{st}}$
     \\

     \hline
     
SMELL ($r_r=0$) & 0.8254 &  \textbf{2.2857} & 0.0116 & 9
\\
SMELL ($r_d=0$) & 0.8169     &   2.8571 & 0.0201 & 5 
\\
SMELL ($r_r=r_d=0$) & 0.8178 &   \textbf{2.2857} & 0.0193  & 9 
\\
SMELL (Euclidian) & 0.7994 &   3.6429 & 0.0376 & 4
\\
SMELL (S-space) & \textbf{0.8268} &   \textbf{2.2857}  & \textbf{0.0102} & \textbf{10}
\\
\hline
\end{tabular}
\caption{Performance comparison summary of ablation study when using  KNN classification for 27 different datasets. The full table can be see in~\ref{sec:appendix_ablation}}.
\label{tab:ablation}
\end{table}

When $ r_r = 0 $, we observe a slight impact on the result (when compared to $ r_d = 0 $), but for the datasets Appendicitis, Vowel, Banana (10\%) and Twonorm (10\%) (see Table~\ref{tab:ablation_full} in \ref{sec:appendix_ablation}), changing $ r_r $ to zero, made SMELL stop being the first position (when analyzing accuracy), to the second last position.

When we consider the case of SMELL with the Euclidean metric (instead of S-space), our proposal has the worst performance among the cases analyzed for the four metrics adopted. In particular, we see that the average difference for the first place (DIFF\_AVG) has increased 200\%. In addition, it is worth noting that for the datasets Twonorm (10\%), Banana (10\%), Wdbc, Movement\_libras and Appendicites (see Table~\ref{tab:ablation_full} in \ref{sec:appendix_ablation}), SMELL goes from first place to last place. This evidencing the limitation of the Euclidean metric (even using the function $ f_\Theta $ found by our proposal).

\begin{figure*}
\centering
\begin{subfigure}{.45\textwidth}
  \centering
  \includegraphics[width=1.0\linewidth]{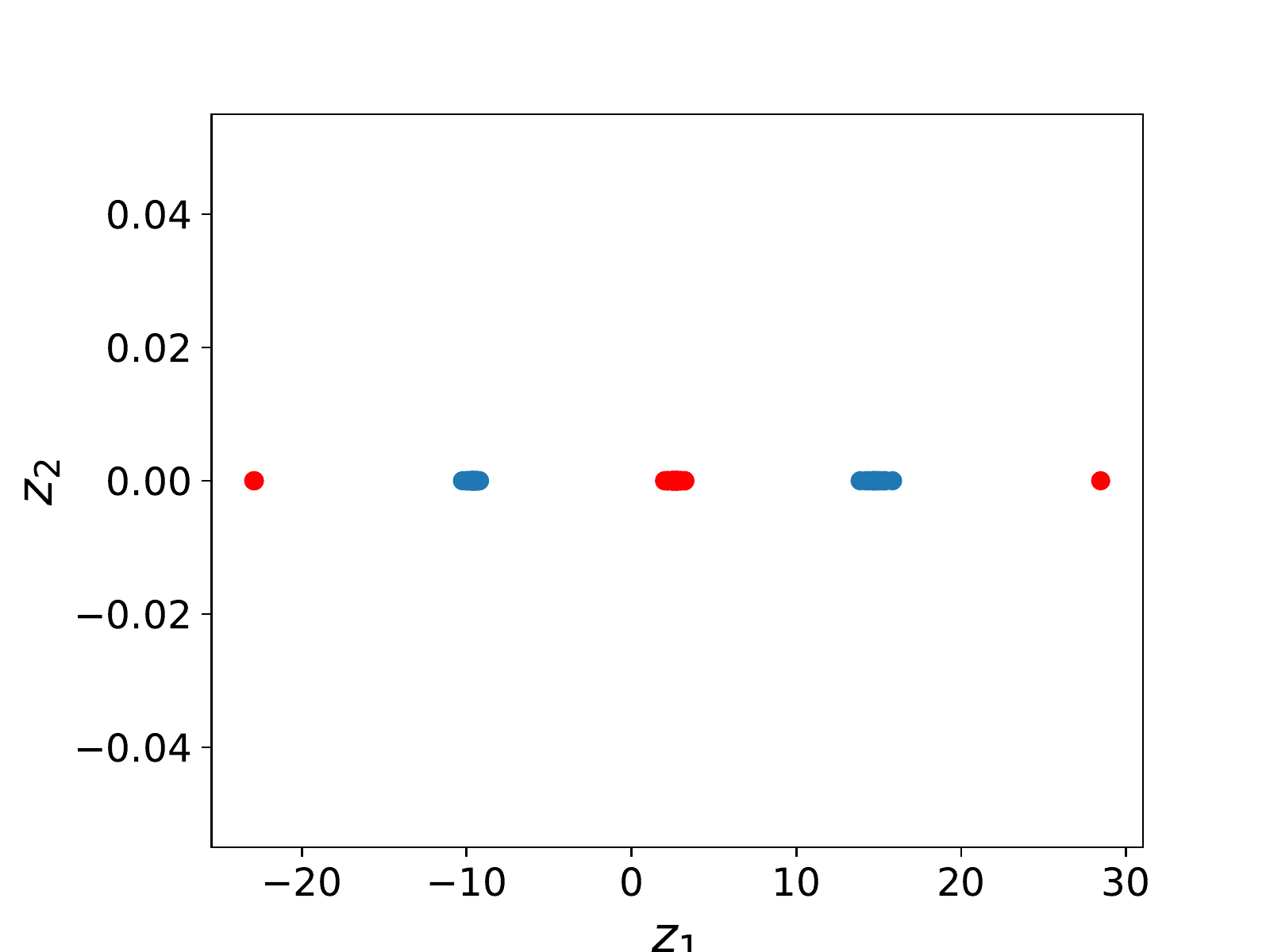}
  \caption{Latent space ($n=2$) from SMELL.}
  \label{fig:encoder_sonar_smell}
\end{subfigure}
\begin{subfigure}{.45\textwidth}
  \centering
  \includegraphics[width=1.0\linewidth]{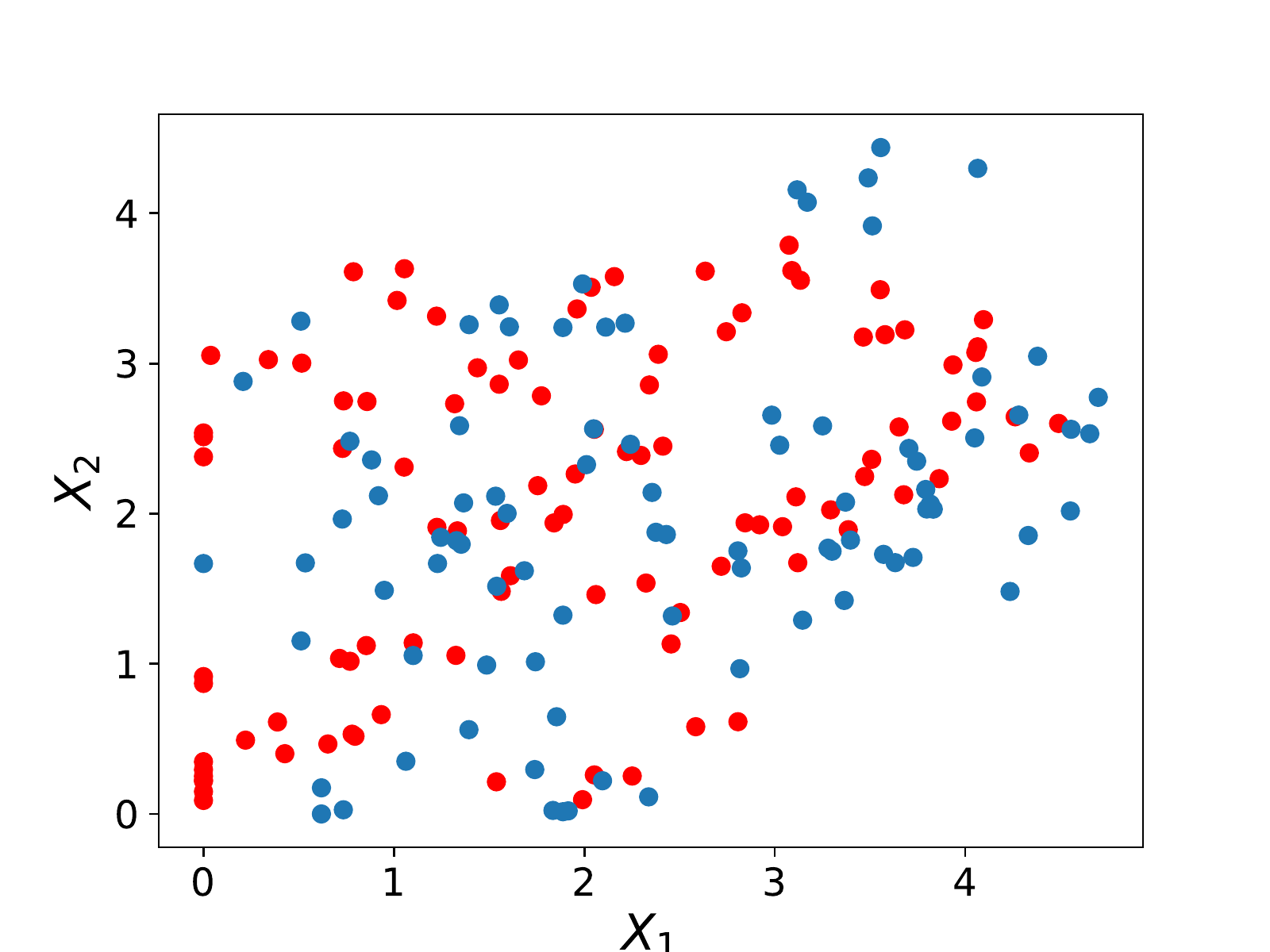}
  \caption{Encoder output ($n=2$) without markers.}
  \label{fig:encoder_sonar_autoencoder}
\end{subfigure}%
\caption{Sonar Dataset: Latent Space and Encoder space Analysis.}
\label{fig:encoder_sonar}
\end{figure*}



We hypothesized that SMELL tends to find a representation in S-space that captures similarity semantics. 
SMELL (S-space) has the best performance among all the versions used, evidence of the last statement. In addition, Repulsive regularizer tends to increase the separability of latent space. This fact shows the importance of the repulsive regularizer.

Moreover, we evaluated the behavior of SMELL in comparison with an autoencoder (without markers). Figure~\ref{fig:encoder_sonar} shows the behavior of the encoder output under SMELL and the autoencoder. In Figure~\ref{fig:encoder_sonar_smell}, whose encoder was used with SMELL, the classes have well-defined groups, differently to Figure~\ref{fig:encoder_sonar_autoencoder}, where there is a greater dispersion of the classes, with no clustering pattern being observed. The same behavior is found in the MNIST dataset, as shown in  Figure~\ref{fig:encoder_mnist}.

\begin{figure*}
\centering
\begin{subfigure}{.45\textwidth}
  \centering
  \includegraphics[width=1.0\linewidth]{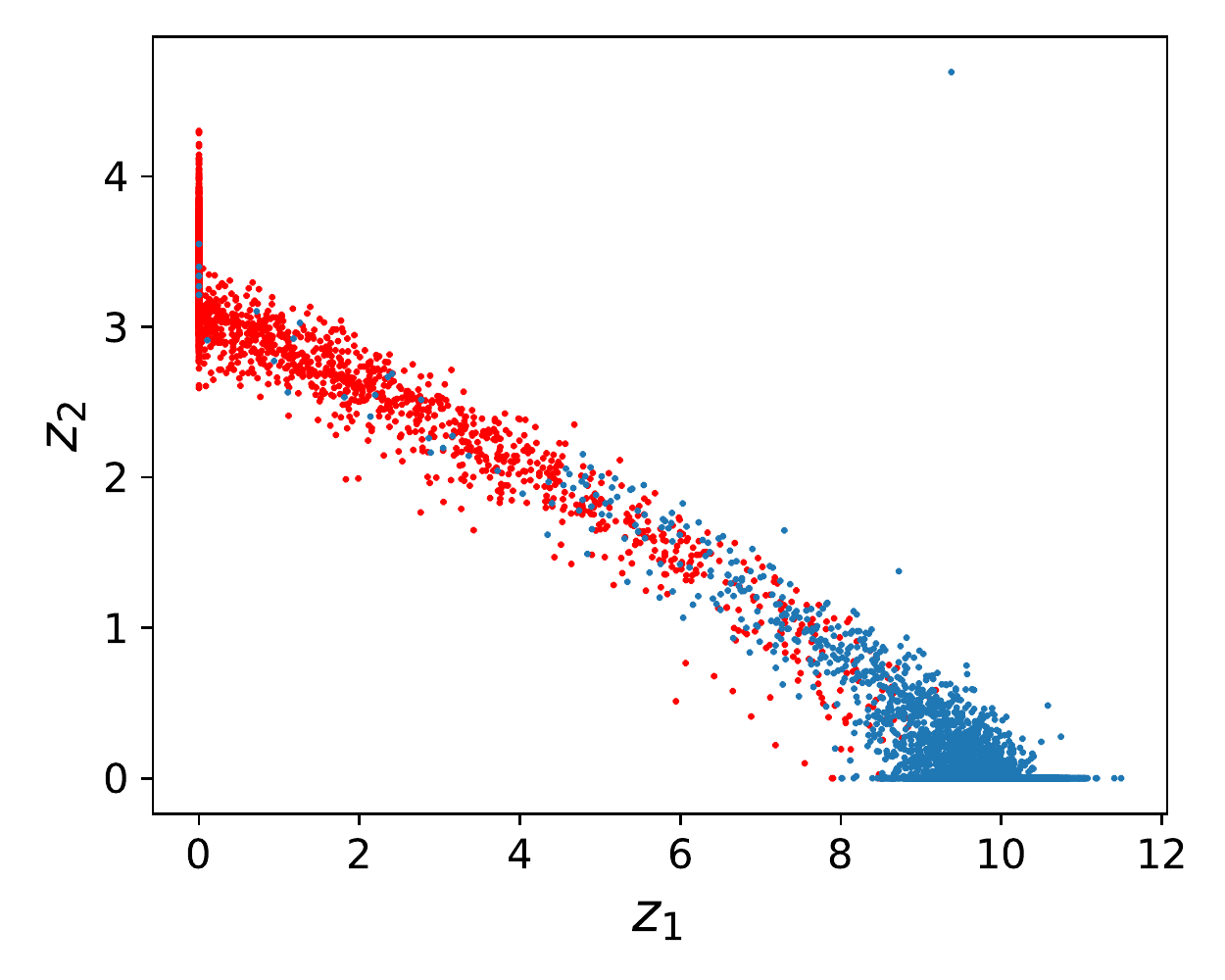}
  \caption{Latent space ($n=2$) from SMELL.}
  \label{fig:encoder_mnist_smell}
\end{subfigure}
\begin{subfigure}{.45\textwidth}
  \centering
  \includegraphics[width=1.0\linewidth]{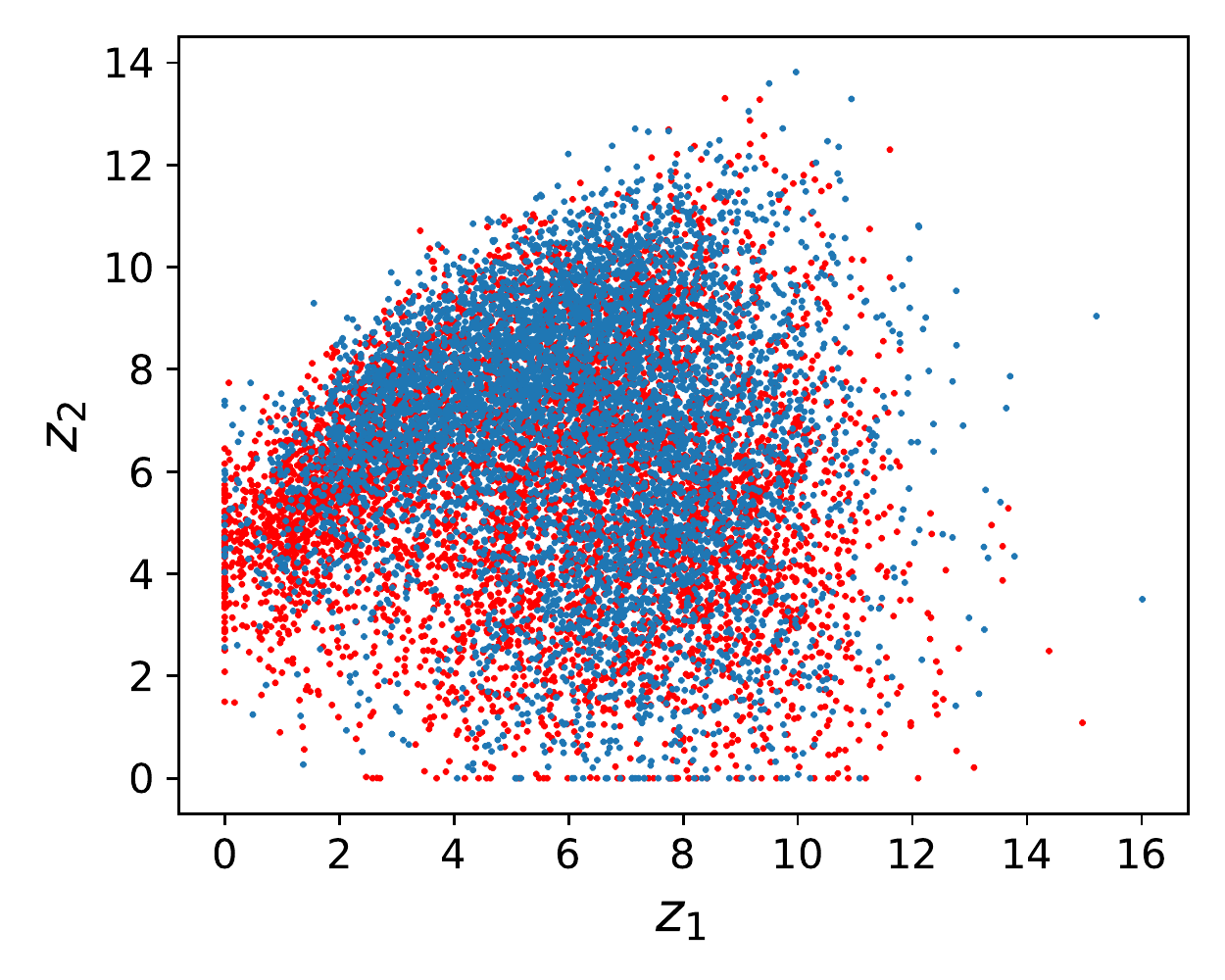}
  \caption{Encoder output ($n=2$) without markers.}
  \label{fig:encoder_mnist_autoencoder}
\end{subfigure}%
\caption{MNIST Dataset: Latent Space and Encoder space Analysis.}
\label{fig:encoder_mnist}
\end{figure*}




\subsection{Performance Comparison}
\label{subsec:comparison}

\begin{table}
\centering
\small
\begin{tabular}{p{2cm}
>{\columncolor{mygray}}c c
>{\columncolor{mygray}}c c
>{\columncolor{mygray}}c}
\hline
    Dataset &
    Category &
    Accuracy\_AVG &
    Ranking\_AVG   &
    Diff\_AVG &
    \# of 1$^{\text{st}}$
     \\ \hline
     
ANMM\cite{anmm} & MeL & 0.7768 & 5.7143 & 0.0726 & 0
\\
KDMLMJ\cite{NGUYEN2017215}  & MeL &0.7824 & 4.9643 & 0.0667    &   5
\\
Contrastive\cite{drlim}  & DMeL&0.6923 & 7.1786 & 0.1571 & 1
\\
MSLoss\cite{wang2019multi} & DMeL &0.8081 & 3.7857 & 0.0418 &  5
\\
Triplet\cite{Schroff_2015_CVPR} & DMeL & 0.8115 & 3.9643 & 0.0379 & 4
\\
NCA\cite{NIPS2004_2566} & MeL &0.7732 & 5.6786 & 0.0762 & 3
\\
NPair\cite{Nparloss} & DMeL &0.7380 & 6.2143 & 0.1114 & 2
\\
FastAP\cite{Fastaploss} & DMeL & 0.7801 & 4.7857 & 0.0693 & 3
\\
Euclidean & - &0.7486 & 5.8929 & 0.1007 & 1
\\
SMELL & DMeL & \textbf{0.8268} & \textbf{3.6429} & \textbf{0.0226} & \textbf{7}
\\
\hline
\end{tabular}
\caption{Performance comparison of some distance metrics approaches and SMELL when using  KNN classification for 27 different datasets. The best results are in \textbf{bold}.}
\label{tab:results_p1}
\end{table}

To compare our results to other techniques present in the literature, we used the datasets and the metrics appointed in Section~\ref{sec:experiments}, with $n=64$ (latent dimension). 
The summary of results can be found in Table~\ref{tab:results_p1} (for complete results, see~\ref{app_results}). The second column indicates whether the approach is based on Metric Learning (MeL) or Deep Metric Learning (DMeL) techniques. We compare SMELL to metric learning approaches~\cite{anmm, NIPS2004_2566, NGUYEN2017215}, deep metric learning approaches~\cite{ drlim, wang2019multi, Schroff_2015_CVPR, Nparloss, Fastaploss}, and the usual Euclidean distance. The k-fold cross-validation results are shown by averaging the standard deviation and accuracy values reported by the process. 

SMELL achieved the best accuracy results in 7 (\# of 1$^{\text{st}}$) datasets, thus surpassing all other analyzed algorithms (improving 40\% more datasets when compared with second best). KDMLMJ and MSLoss, the second-best, achieved the best result in 5 datasets. SMELL achieved an accuracy of 0.8268 (Accuracy\_AVG).  The second-best, Triplet, achieves 0.8115, and the third-best, MSLoss, achieves 0.8081. In a simple dataset (Monk-2), SMELL achieves 100\%. It is worth mentioning that SMELL, even in some situations its performance is not the best, reaches accuracy close to the best algorithm. For instance, the average distance between SMELL and the best algorithm is 2.26\% (Diff\_AVG), improving its average distance by 67.70\% and 84.96\% compared to Triplet (second-best) and MSLoss (third-best), respectively. Finally, when we average the ranking, SMELL achieved an average of $3.6429$ (Ranking\_AVG), the smallest value among all algorithms. The second-best was MSLoss, reaching $3.7857$. 

In Figure~\ref{fig:compare}, we compare SMELL's accuracy with all othe approaches used in this paper. We noticed that SMELL, in all cases, manages to overcome the techniques presented when we compare the number of individual hits, i.e., the number of datasets that SMELL exceeds the accuracy of the analyzed baseline. Besides, we noticed a small scattering of the blue dots around the black line compared to the red triangles' behavior. It indicates that even when SMELL performs worst than another approach, its results are close to the best.

 \begin{figure}
    \centering
    \includegraphics[width=1.0\linewidth]{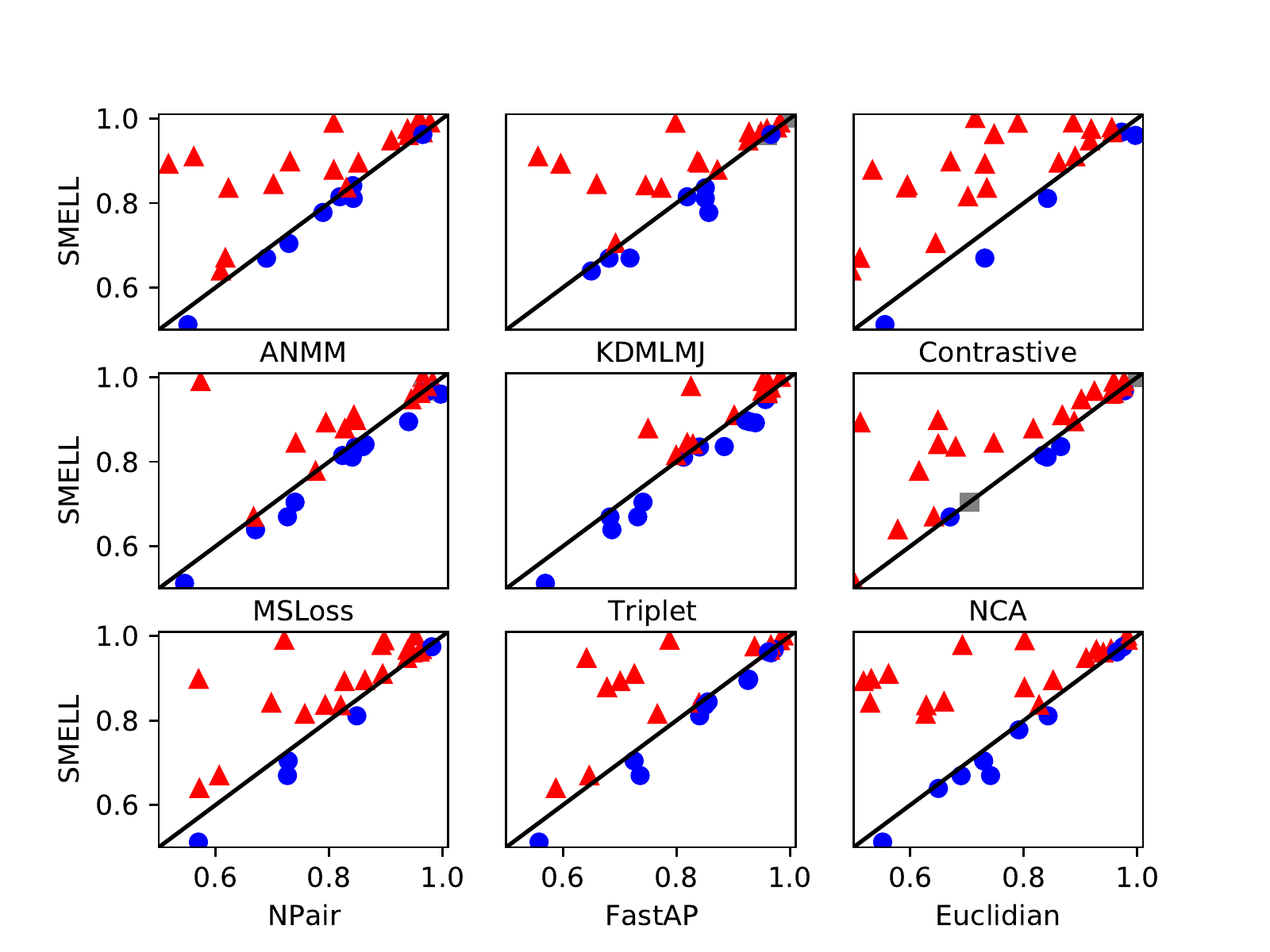}
    \caption{Comparison between SMELL and all proposals used as a baseline. The red triangles, blue dots and gray squares indicate when the SMELL is superior, inferior and has the same mean accuracy result, respectively, when compared to other proposals.}
    \label{fig:compare}
\end{figure}

Analyzing the metric learning approaches only (see Table~\ref{tab:results_p1}), we see that KDMLMJ and NCA algorithms achieve better results (among the algorithms adopted as baseline) when considering the metric that counts the number of times that the algorithm's accuracy surpassed all the others. This behavior is because the algorithms have been evaluated with KNN, and these algorithms were specifically designed to improve this classifier. 


Considering the MNIST data, we can see that our proposals achieves considerably better results, particularly for lower dimensions ($d = \{1,2,4\}$). This characteristic is highlighted by the area under the curve, as seen in Figure~\ref{fig:mnist_acc}.
 \begin{figure}
    \centering
    \includegraphics[width=0.90\linewidth]{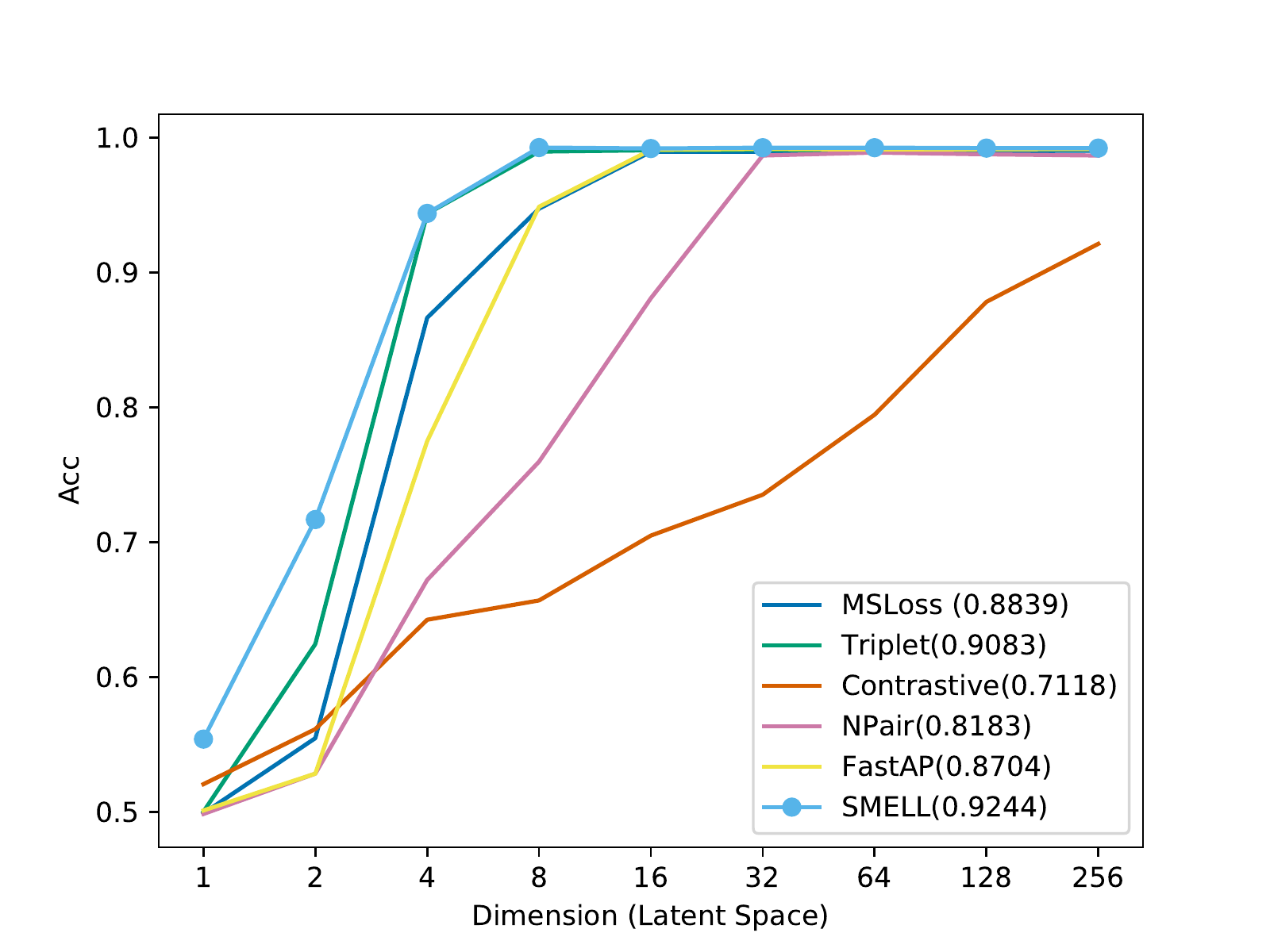}
    \caption{MNIST evaluation for different dimension of latent space. The AUC is reported in parentheses.}
    \label{fig:mnist_acc}
\end{figure}
We noticed that some techniques are highly dependent on the feature extractor. For example, the Contrastive loss~\cite{drlim} was proposed to capture coherent semantics in a latent space. However, the proposal aims to capture the semantics of the data, but, without the aid of convolutions layers, we observe a performance degradation when compared to other techniques.

\subsection{Behavior Analysis}
\label{subsec:behaviour}

    \begin{figure*}[!ht]
        \centering
        \includegraphics[width=0.9\linewidth]{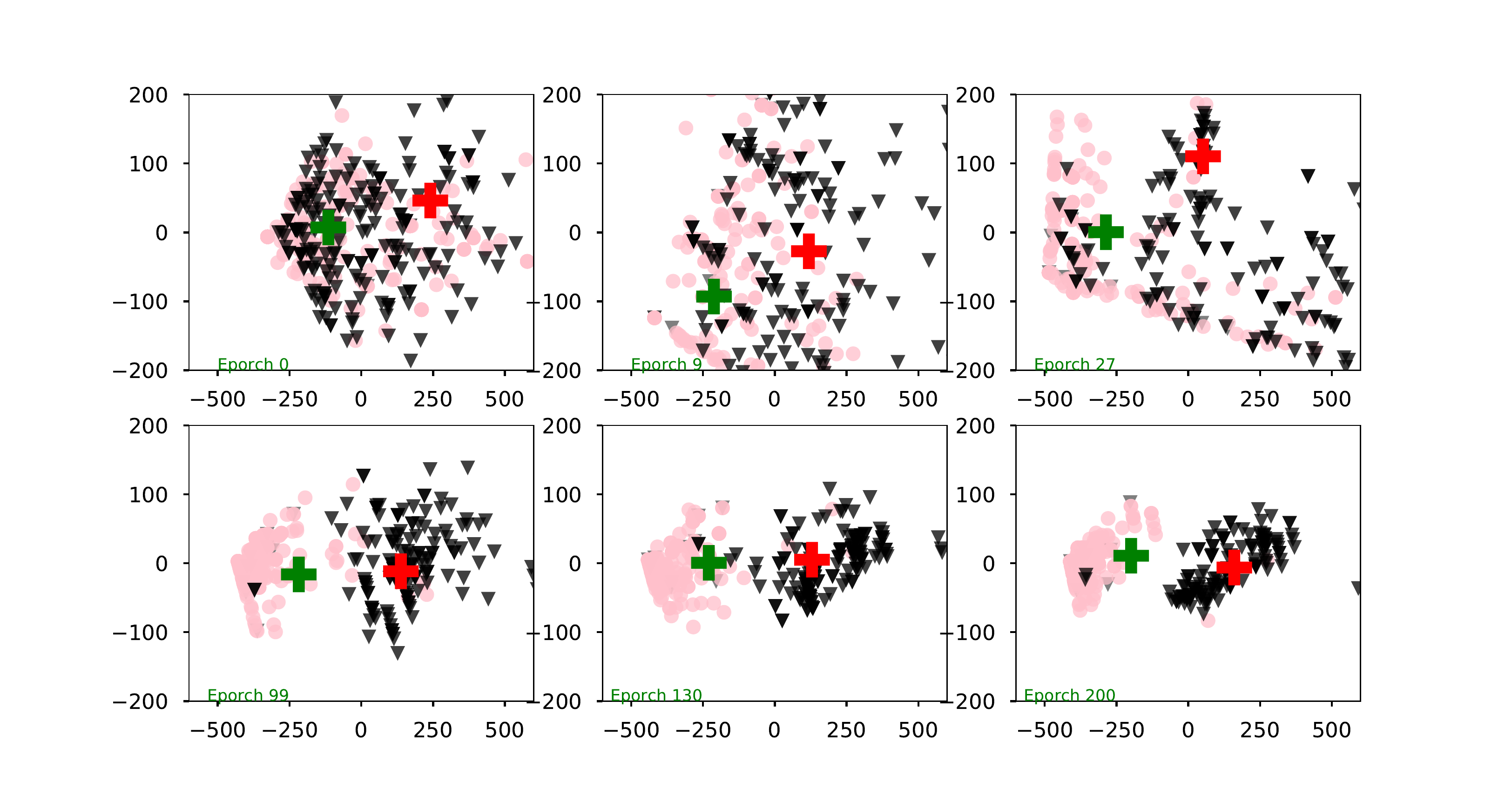}
        \caption{Simultaneous training of  $\mu^+$ (green) and $\mu^-$ (red) markers's position and data representation in S-space for some training epochs.}
        \label{fig:plot_pca_1}
    \end{figure*}

Our proposal is based on optimizing the parameter set  $\bm \Sigma = \{\Theta, \Theta', \bm{M}\}$ using markers 
(with a t-student kernel). 


SMELL learns a representation of input pairs that groups the points with similar and dissimilar labels around their respective markers. We can observe this behavior in Figure~\ref{fig:plot_pca_1}. In this Figure, the input pairs of similar and dissimilar labels are represented by  pink circles and gray triangles, respectively. In addition $\mu^+$  and $\mu^-$ markers are represented by green and red crosses respectively.
We plot some $ s_{ij} $ vectors for input pairs $(x_i, x_y)$ of the test set for the Balance dataset. Initially, after training the autoencoder, a two-dimensional plot was created by the aid of PCA before (first figure) and after (the other figures) the optimization process.

We can observe in fist plot in the Figure~\ref{fig:plot_pca_1} (before adjusting the markers' positions) that the points do not present a well-defined cluster structure. This behavior changes when we analyze the last plot in  Figure~\ref{fig:plot_pca_1} (after adjusting the markers).

In the last plot in Figure~\ref{fig:plot_pca_1}, we can see that there are well-defined groups around the markers. Moreover, by comparing the scale of the Figures~\ref{fig:plot_pca_1}, we see that in the last case, points are more spaced, i.e., our proposal tends to group points around their respective markers. This behavior corroborates our initial hypothesis described in (\textbf{H\ref{hpy:disjoint_regions}}).

We observed that our proposal acts as an attractive potential. In this sense, the marker ``pulls'' the favorable points (similarity mark ``pulls'' similar points). Therefore, their movement resembles a Group Mobility Model~\cite{hong1999group}, i.e., the marker is being positioned, and the points go ``following'' the leader as a ``caravan'' of nomads. At the same time, the markers tend to repulse themselves.

This can be seen as such an intense attracting field, which locks the movement dynamics of the points closest to the markers.

\subsection{Latent space and S-space analysis}
\label{subsec:latent}

For a better understanding of the latent space found by SMELL, we analyzed the behavior of our proposal using the sonar and MNIST datasets as shown in Figure~\ref{fig:mnist_fig} and~\ref{fig:sonar_fig}. 
%
%
 For the sake of visualization, in this analysis, we use the setup discussed in Section~\ref{sec:experiments} with $n=2$ (latent dimension). Figures~\ref{fig:sonar_latented=2} and~\ref{fig:latented=2} show the \textit{latent feature space} (output of encoder). Observe that in these figures, points represent individual objects. Red and blue points represent different classes. There are two classes in sonar dataset, and we show only two classes of MNIST (handcraft digits 4 and 9). These \textit{latent feature spaces} result from the joint optimization process of the autoencoder and the S-space. 
 
 \begin{figure*}
\centering
\begin{subfigure}{.38\textwidth}
  \centering
  \includegraphics[width=1.0\linewidth]{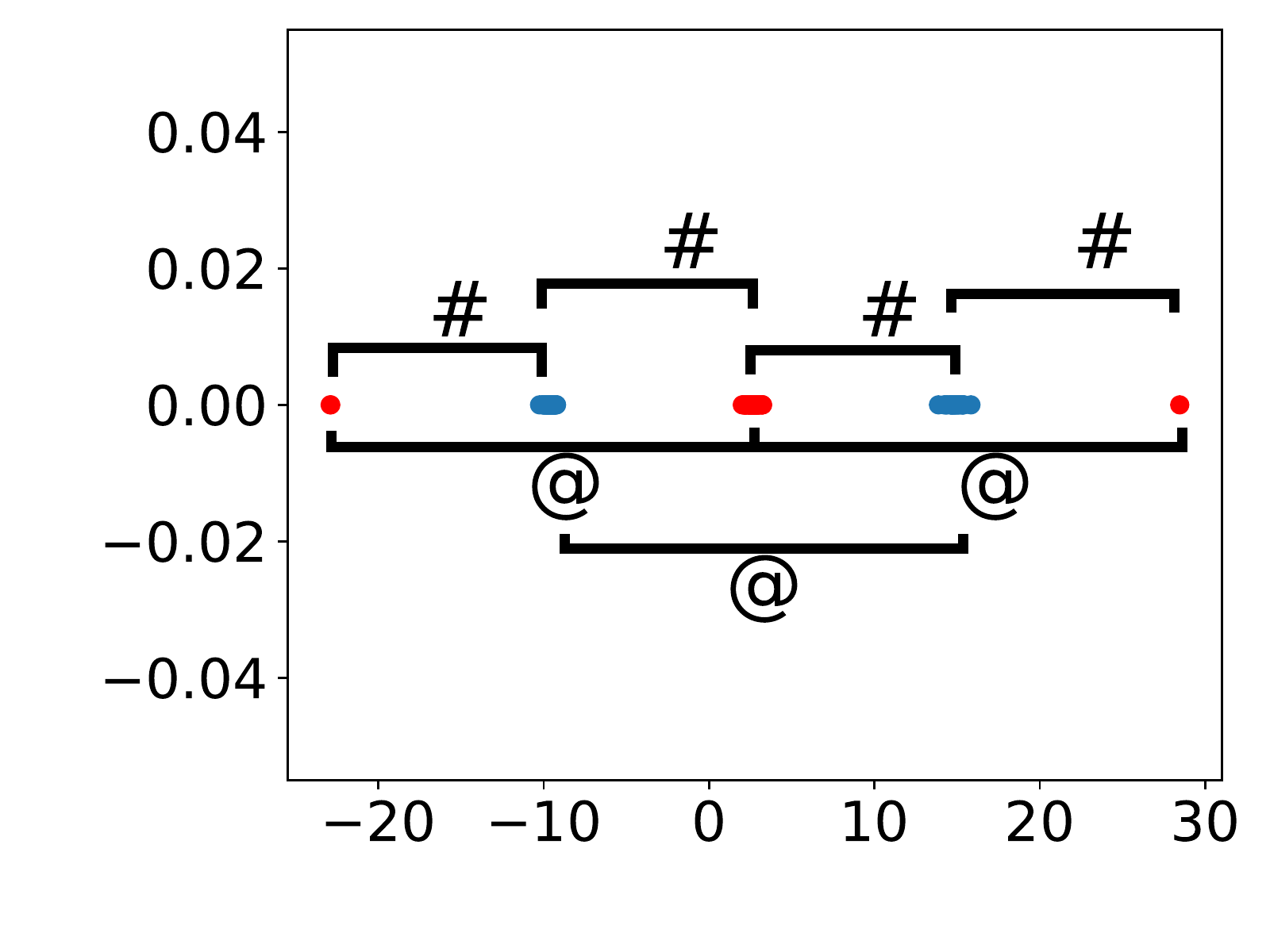}
  \caption{Latent space ($n=2$)}
  \label{fig:sonar_latented=2}
\end{subfigure}
\begin{subfigure}{.38\textwidth}
  \centering
  \includegraphics[width=1.0\linewidth]{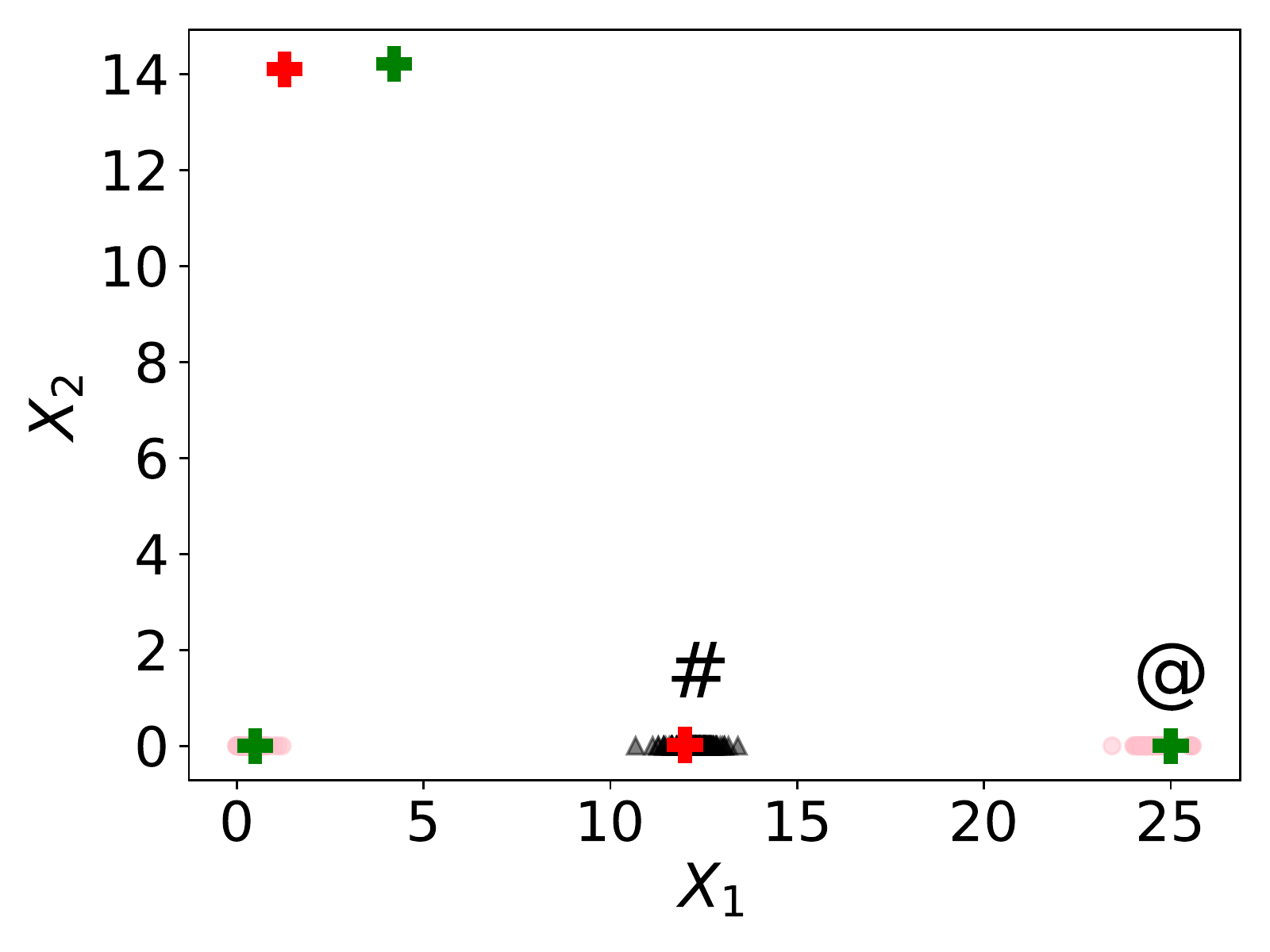}
  \caption{S-space ($n=2$)}
  \label{fig:sonar_fig1}
\end{subfigure}%
\caption{Sonar Datasets: Latent Space and S-space Analysis for SMELL.}
\label{fig:sonar_fig}

\end{figure*}

\begin{figure*}
\centering
\begin{subfigure}{.45\textwidth}
  \centering
  \includegraphics[width=0.88\linewidth]{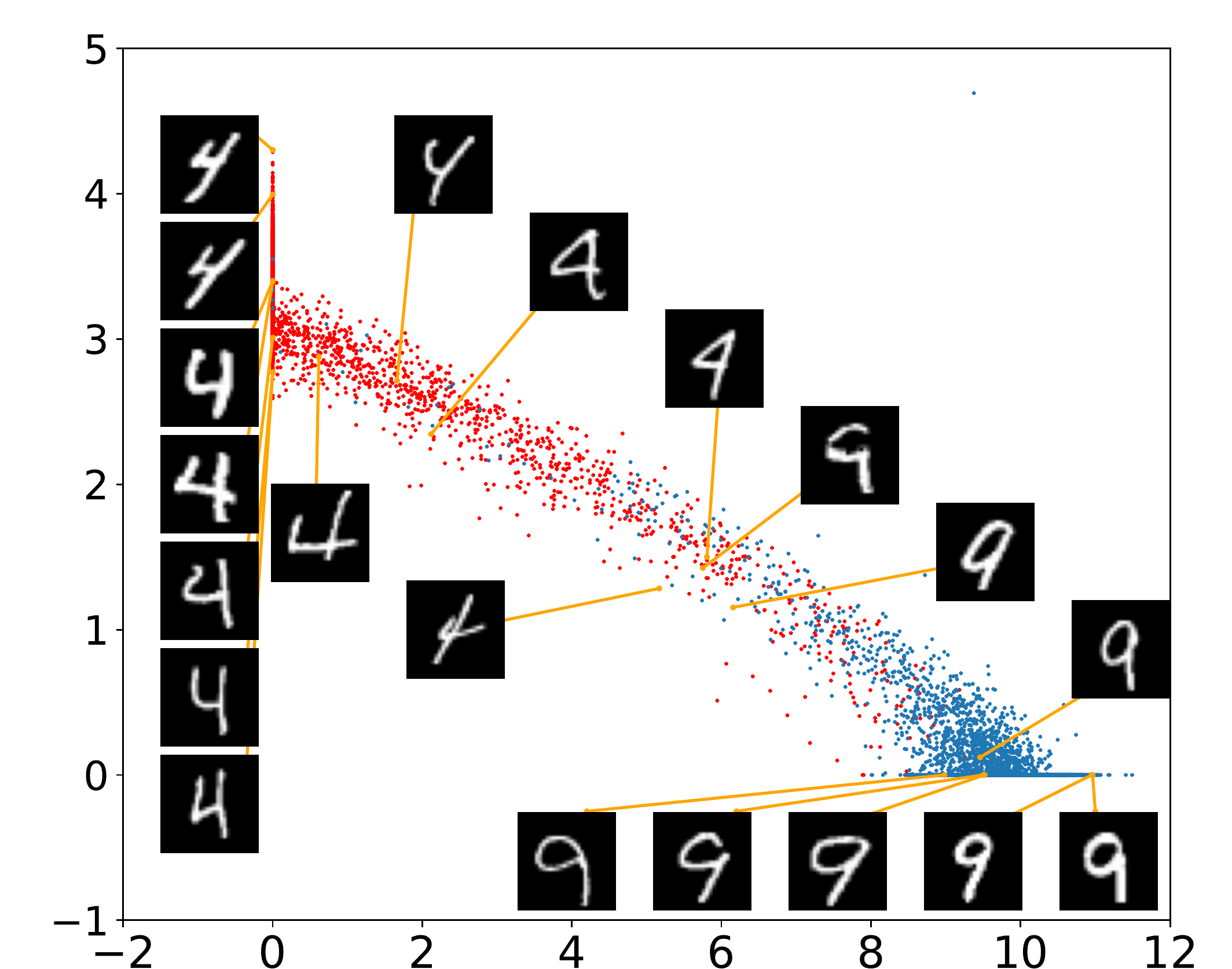}
  \vspace{12pt}
  \caption{Latent space ($n=2$)}
  \label{fig:latented=2}
\end{subfigure}
\begin{subfigure}{.45\textwidth}
  \centering
  \includegraphics[width=1\linewidth]{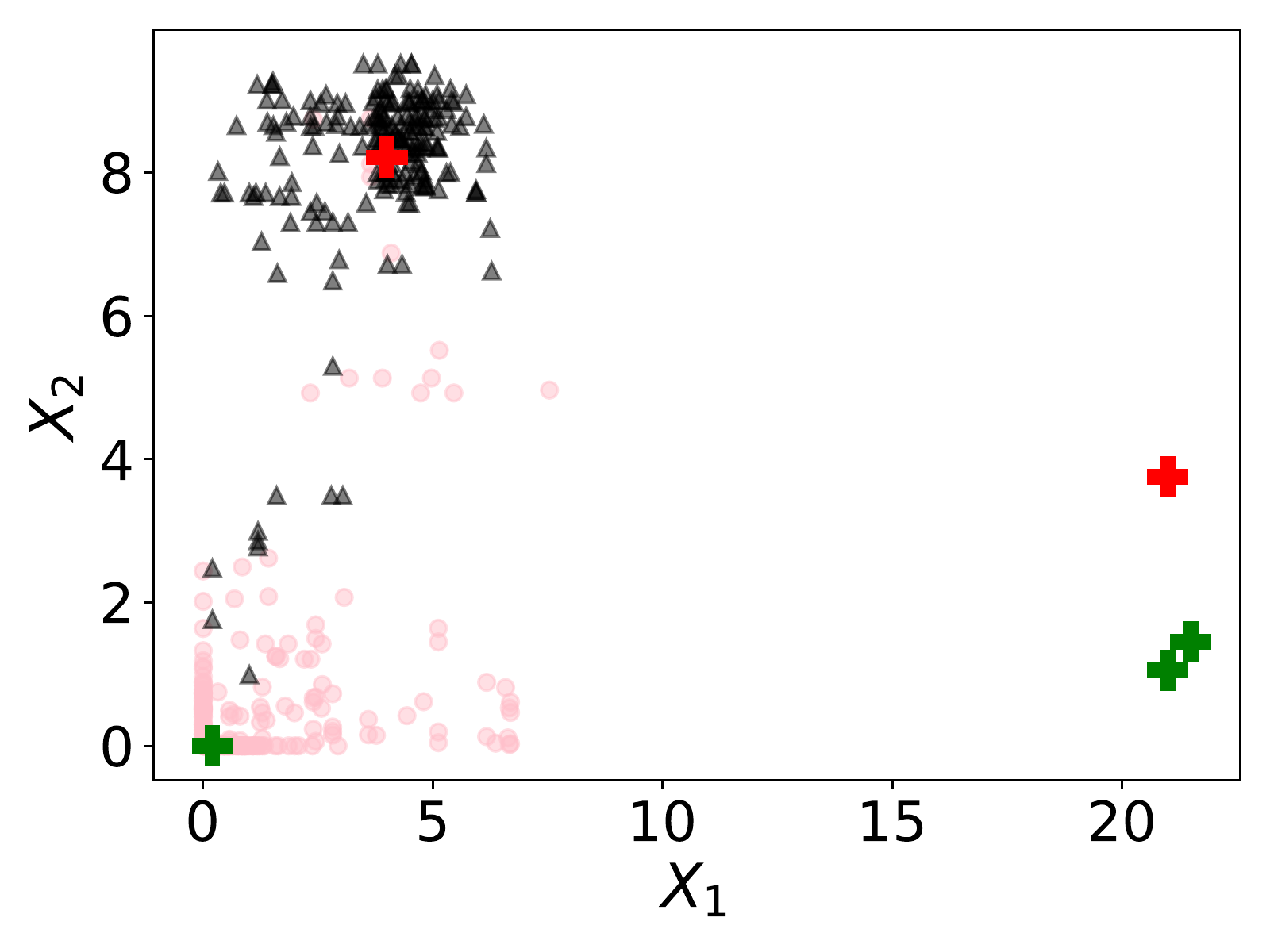}
  \caption{S-space ($n=2$)}
  \label{fig:sfig1}
\end{subfigure}%

\caption{MNIST Datasets: Latent Space and S-space Analysis for SMELL.}
\label{fig:mnist_fig}
\end{figure*}
 
 In Figure~\ref{fig:sonar_latented=2}, we observe that red points are grouped in different regions far apart at a distance approximately constant, denoted by $@$. Similarly, blue points are apart at a distance approximately constant, $@$. Different clusters are apart at a distance approximately constant, denoted as $\#$. 

Figures~\ref{fig:sonar_fig1} and~\ref{fig:sfig1} show a random sample of 400 data pairs from the sonar dataset mapped to S-space (200 similar and 200 dissimilar pairs). In S-space, points represent a pair of objects. Pink circles and black triangles represent similar and dissimilar labels, respectively. Also, similarity and dissimilarity markers are represented by green and red crosses, respectively.

Figure~\ref{fig:sonar_fig1} show some clustered regions. The region grouped by the similarity marker (closer to the origin) is responsible for grouping elements of similar classes with a distance closer to 0. This result corroborates with the Proposition~\ref{pro:mod}. The same behavior is found in Figure~\ref{fig:sfig1}, where we observe a green cross close to the origin.

However, in Figure~\ref{fig:sonar_fig1}, we observe some similar objects mapped to points that have distance close to $@$, instead of zero. The green cross located at $@$ is responsible for creating the similarity region that represents this situation. Other regions of similarity and/or dissimilarity can occur, depending on the data complexity, and are represented by other green/red crosses. Dissimilarity regions are depicted as $\#$. Therefore, in the space found by SMELL, we see the behavior of multiple groups, separated by distances determined by the similarity/dissimilarity markers (labels $\#$ and $@$).

Observe in Figure~\ref{fig:latented=2} the soft transition from digit 4 to 9, which shows that the S-space preserves the connection between these two similar digits. This effect is captured even though we do not use any data-specific feature extractor, such as convolution layers. In SMELL, the encoder can be switched by any feature extractor tailored explicitly for the input data. 

In Figure~\ref{fig:latented=2}, we observe that the handcrafted digits four are grouped (on the left). We observe that even in this group, the similarities between the digits remain. The first two digits in the top-left region correspond to numbers with thicker writing and slightly rotated, and as we go down in the latent space, the shape of the digit starts to become thinner. This behavior indicates a gradient that represents the thickness of the object. This same behavior occurs similarly to digit 9. There is a transition from groups of digits 4 to 9, i.e., there is a semantic in this transition. 
As we move along the diagonal that connects the two groups, gradually, the numbers 4 resemble the number 9, so that, in the middle of the diagonal, it is tough to differentiate between these two numbers. It is also worth noting that, the further away from the denser regions of the points cloud, the less readable are the numbers, for instance, the two digits four depicted below the transition diagonal. We see that our proposal uses markers to help in the convergence and finds a latent space that preserves the semantics of the original data. This behavior corroborates our initial hypothesis described in (\textbf{H\ref{hyp:separability}}).





It is also worth noting that our proposal has no sensitive learning in the presence of multiple markers, i.e., even in this experiment that we have defined three similarity markers and two dissimilarity markers, our proposals does not use all. This behavior is emphasized in Figures~\ref{fig:sonar_fig1} and~\ref{fig:sfig1}
, where our proposal removes excessive markers from the groupings  by locating these markers far away from the data. This behavior is an indication that the number of markers is a virtual parameter of the model. 

We hypothesize that markers group data points considered similar (in our context, which have the same labels) and dissimilar (different labels) in disjoint regions. Figures~\ref{fig:sfig1} and~\ref{fig:sonar_fig1}
show this behavior, where we can see similar and dissimilar groups in distinct (and disjoint) regions in S-space. In addition, we can notice in Figure~\ref{fig:sonar_latented=2} that our proposal allows different clusters for the same class ({\it optimal latent space}), as mentioned in Definition~\ref{ax:ex}.

\section{Conclusion}
\label{sec:conclusion}

In this work, we proposed a Supervised Distance Metric learning Encoder with Similarity Space (SMELL), based on the fact that the distance metrics can be simultaneously learned along with a latent representation of the data and the similarity markers.
We hypothesized that SMELL groups data points consider similar  and increases classes separability. We showed evidences that support our hypothesis by a comprehensive behavior analysis.

We also conducted an extensive validation of our proposal comparing it to many methods over different type of input data. We obtained promising results and, in general context, we got best results.
	
We intend to investigate the possible applications for this type of approach, as well as to use the Proposition~\ref{theo:risk} to build a novel loss function specifically tailored for SMELL.

\appendix
\section{Proofs}
\label{sec:appendix}

\begin{propApp}[\ref{pro:mod}]
In S-space, given $k$ positive markers in the set $ \mathcal{M}^+  $ and $n-k$ negative markers in  $  \mathcal{M}^-$, the latent space found by SMELL, i.e., the estimation of the parameters $\Theta$ of  $f_\Theta$, generates an {\it optimal latent space} if $\exists\; \bm \mu_i \in \mathcal{M}^+$ so that $||\mu_i||_2^2 <  ||\mu_j||_2^2$ for any $\bm \mu_j \in \mathcal{M}^-$.

\end{propApp}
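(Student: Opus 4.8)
The plan is to reduce the proposition to a single inequality for the t-Student kernel evaluated at the origin of the S-space. First I would unpack Definition~\ref{ax:ex}: because $||\bm s_{ij}||_2 \ge 0$, the condition $\mathbb{E}[||\bm s_{ij}||_2]=0$ forces $\bm s_{ij}=\bm 0$, i.e.\ $\bm z_i=\bm z_j$. Hence optimality is the requirement that the encoder never collapses a dissimilar pair to a single latent point, and proving it reduces to showing that SMELL labels the origin of the S-space as a \emph{similarity} region. The reason this is the right reduction is that if $q^{+}(\bm 0)>q^{-}(\bm 0)$, then a dissimilar pair placed at the origin incurs large cross-entropy, so the loss drives all dissimilar pairs away from $\bm 0$ toward the negative markers; consequently $\bm s_{ij}=\bm 0$ can occur only for similar pairs, which is precisely $\bm s_{ij}=\bm 0 \Rightarrow l(\bm x_i)=l(\bm x_j)$. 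Everything therefore hinges on the model's prediction at the origin.

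Next I would make this quantitative by evaluating Equation~\ref{eq:eq1} at $\bm s_{ij}=\bm 0$, obtaining for each marker the vote
\[
q^m(\bm 0)=\frac{(1+||\bm\mu_m||_2^2)^{-1}}{\sum_{m'}(1+||\bm\mu_{m'}||_2^2)^{-1}},
\]
which is a strictly decreasing function of $||\bm\mu_m||_2^2$. The origin is a similarity region exactly when $q^{+}(\bm 0)>q^{-}(\bm 0)$, and after clearing the shared denominator this is the comparison $\sum_{\bm\mu\in\mathcal{M}^+}(1+||\bm\mu||_2^2)^{-1} > \sum_{\bm\mu\in\mathcal{M}^-}(1+||\bm\mu||_2^2)^{-1}$.

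I would then feed in the hypothesis. The existence of a positive marker $\bm\mu_i$ with $||\bm\mu_i||_2^2<||\bm\mu_j||_2^2$ for every negative $\bm\mu_j$ says precisely that the globally smallest-norm marker is positive, so the single largest vote at the origin is cast by a positive marker. To turn this pointwise dominance into dominance of the full sums I would invoke the effective single-marker-per-class behavior that SMELL reaches in practice: the repulsive regularizer $R_d$ together with the cross-entropy loss pushes the redundant markers of each class far from the data (the ``virtual hyperparameter'' effect visible in Figures~\ref{fig:sonar_fig1} and~\ref{fig:sfig1}), so their weights $(1+||\bm\mu||_2^2)^{-1}$ become negligible and each side of the comparison is governed by its one active representative. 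With the active positive marker strictly closer to the origin than the active negative one, $q^{+}(\bm 0)>q^{-}(\bm 0)$ follows and the latent space is optimal.

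The hard part is exactly this aggregation step, since a crude bound only gives that one positive term exceeds each individual negative term, which need not exceed a sum over many negative markers. The cleanest way around it is to work in the limiting configuration in which redundant markers are sent to infinity: there the kernel weights of all but one marker per class vanish and the comparison collapses to $(1+||\bm\mu^+||_2^2)^{-1}$ versus $(1+||\bm\mu^-||_2^2)^{-1}$, where $||\bm\mu^+||_2^2<||\bm\mu^-||_2^2$ settles the matter immediately, matching the single-marker setting later used in Proposition~\ref{theo:risk}. Making rigorous that the trained markers actually attain this effective one-per-class state is the crux on which the argument rests.
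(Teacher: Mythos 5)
Your core computation is the same one the paper uses: evaluate the t-Student kernel of Equation~\ref{eq:eq1} at $\bm s_{ij}=\bm 0$ and compare the positive mass $\sum_{\bm\mu\in\mathcal{M}^+}(1+||\bm\mu||_2^2)^{-1}$ against the negative mass $\sum_{\bm\mu\in\mathcal{M}^-}(1+||\bm\mu||_2^2)^{-1}$. But you argue in the opposite logical direction. You try to prove the sufficiency literally stated in the proposition (marker condition $\Rightarrow$ optimal latent space), whereas the paper's proof assumes the latent space is optimal, argues that $q^+_{ij}$ must then tend to $1$ at the origin, deduces the sum inequality, and only then extracts the single-marker norm comparison by bounding the positive sum above by $k$ times its largest term, the negative sum below by $(w-k)$ times its smallest term, and invoking the equal-count configuration $k = w-k$ from Section~\ref{subsec:initialization}. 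In other words, the paper establishes (informally) the converse implication, which is also how the authors read it in the discussion following the proposition ("if SMELL finds an optimal latent space, at least one positive marker has a smaller norm\dots").

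The gap you flagged in your own aggregation step is real and, in the sufficiency direction, fatal without extra hypotheses: one positive marker at the origin contributes weight $1$, while three negative markers each at norm $1$ contribute total weight $3/2$, so the hypothesis "the smallest-norm marker is positive" holds yet $q^+(\bm 0)<q^-(\bm 0)$. Your proposed fix --- that the repulsive regularizer drives redundant markers to infinity so each class is effectively represented by one marker --- is an empirical observation about trained models (the "virtual hyperparameter" behavior), not a consequence of the stated assumptions, so it cannot close the argument as a proof. The paper avoids this trap by never needing the sum inequality to follow from the norm condition: it derives the sum inequality from optimality first and only afterwards converts it into a statement about individual marker norms via the min/max bounds. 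If you want to salvage the sufficiency direction, you would have to strengthen the hypothesis (e.g., to $\sum_{\mathcal{M}^+}(1+||\bm\mu||_2^2)^{-1}>\sum_{\mathcal{M}^-}(1+||\bm\mu||_2^2)^{-1}$ directly, or to the single-marker-per-class setting later used in Proposition~\ref{theo:risk}), and you would still need to justify the step from "the origin is classified as similar" to "the trained encoder sends only similar pairs to the origin," which both you and the paper treat informally via the cross-entropy loss.
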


\begin{proof}

Given $\bm x_i $ and $\bm x_j $, SMELL measures the similarity between the entries through the t-student kernel given by $q_{ij}^+ $, so that for $ \bm \Sigma^* $ it follows that $ q_{ij}^+ = 1 \Longleftrightarrow l(x_i) = l(x_j) $.

Since $ f_\Theta $ generates an optimal space, we then have $ \mathbb{E} [|| s_{ij} ||_2] = 0 \Longrightarrow l (x_i) = l (x_j) $, so, it follows that for a optimal latent space, we must have

\begin{equation*}
\begin{split}
q_{ij}^+ &= \frac{\sum_{k \in \mathcal{M}^+}(1 + ||\mu_k||_2^2)^{-1}}{\sum_{k \in \mathcal{M}^+}(1 + ||\mu_k||_2^2)^{-1} + \sum_{s \in \mathcal{M}^-}(1 + ||\mu_s||_2^2)^{-1}} \\ 
& = \frac{1}{1 + \frac{\sum_{s \in \mathcal{M}^-}(1 + ||\mu_s||_2^2)^{-1}}{\sum_{k \in \mathcal{M}^+}(1 + ||\mu_k||_2^2)^{-1}}}
.
\end{split}
\end{equation*}

Hence, if we want $ l(x_i) = l(x_j) $, we should ideally have $ q_{ij}^+$ tends to $1 $. It follows that $ \sum_{s \in \mathcal{M}^-}(1 + ||\mu_s||_2^2)^{-1} < \sum_{k \in \mathcal{M}^+}(1 + ||\mu_k||_2^2)^{-1}$.
Therefore, let  $ \mu^+ $ be the element with the smallest module in the set $ \mathcal{M}^+ $; we then have  $ \sum_{k \in \mathcal{M}^+} (1 + ||\mu_k||_2^2)^{-1} < k (1 + ||\mu^+||_2^2)^{-1} $. Analogously, we can consider $ \mu^- $ as the vector with the largest module in the set $ \mathcal{M}^-$, so, $ \sum_{k \in \mathcal{M}^-} (1 + ||\mu_k||_2^2)^{-1} > (n - k) (1 + ||\mu^-||_2^2)^{-1} $.

We can then conclude that $k (1 + ||\mu^+||_2^2)^{-1} > (n - k) (1 + ||\mu^-||_2^2)^{-1}$, and therefore, $(n - k) (1 + ||\mu^+||_2^2) < k (1 + ||\mu^-||_2^2)$. Furthermore, adding the restriction that SMELL has a similar count of positive and negative markers (section~\ref{subsec:initialization}), we have  $1 + ||\mu^+||_2^2 < 1 + ||\mu^-||_2^2$.

\end{proof}{}

\begin{propApp}[\ref{theo:risk}]
For S-spaces built with one marker in each group, $\mu^+ \in M^+$ and $\mu^-\in M^-$, the misclassification risk function of a positive marker is derived analytically.

\end{propApp}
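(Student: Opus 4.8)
The plan is to collapse the problem to a single one-dimensional integral in the radial distance from the positive marker and then evaluate it in closed form. First I would specialize the Student-$t$ kernel of Equation~\ref{eq:eq1} to the one-marker-per-group setting. Writing $D^+ = \|\bm s_{ij} - \bm \mu^+\|_2$ and $D^- = \|\bm s_{ij} - \bm \mu^-\|_2$, the normalization over $\mathcal{M} = \{\bm\mu^+, \bm\mu^-\}$ reduces to
\begin{equation*}
q^+ = \frac{1+(D^-)^2}{2+(D^+)^2+(D^-)^2}, \qquad q^- = 1-q^+ = \frac{1+(D^+)^2}{2+(D^+)^2+(D^-)^2}.
\end{equation*}
Setting $a^2 := (D^-)^2 + 2$ and treating $D^-$ (hence $a$) as a fixed parameter, this makes $q^+$ and the cross-entropy loss $-\log q^+$ explicit rational/logarithmic functions of the running distance $r$ to the positive marker.

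Next I would define the risk of the positive marker as the cross-entropy cost of Equation~\ref{eq:loss} accumulated over the similarity region, i.e.\ an integral of the pointwise loss against the kernel-induced weight, taken radially from the marker ($r=0$) out to the object's distance $D^+$. This is exactly why $R^+$ emerges as a function of both $D^+$ and $D^-$ rather than a single scalar: $D^-$ enters as a parameter through $a$, while $D^+$ is the upper limit of integration.

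The core computation is the evaluation of this integral. The denominator $a^2 + r^2$ invites the substitution $r = a\tan\theta$ (equivalently $u = r/a$), which rationalizes the integrand and factors out the prefactor $\frac{(D^-)^2+1}{\sqrt{(D^-)^2+2}}$ appearing in the statement. After the substitution the integral splits into three elementary pieces: a logarithmic term coming from $-\tfrac12\log\!\big((D^+)^2/a^2 + 1\big)$, a linear-times-arctangent term produced by integration by parts, and the tell-tale squared arctangent $\big[\tan^{-1}(D^+/a)\big]^2$, which arises from $\int \tan^{-1}(r/a)\, d\!\left[\tan^{-1}(r/a)\right] = \tfrac12\left[\tan^{-1}(r/a)\right]^2$. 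Collecting these and re-expressing $a$ in terms of $D^-$ yields the stated closed form.

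The main obstacle I anticipate is the bookkeeping in this last step: the integration by parts couples the logarithmic and arctangent contributions, and a naive trigonometric substitution also produces a Lobachevsky/Clausen-type term $\int \log\cos\theta\,d\theta$ that is not elementary. The crux is to organize the weighting so that these non-elementary contributions cancel, leaving only the arctangent, squared-arctangent, and logarithmic terms. I would verify this cancellation by differentiating the claimed $R^+$ with respect to $D^+$ and checking that the result matches the integrand, which is the delicate part of the argument. Once established, the payoff is that, unlike the estimated quantities in works such as~\cite{hist_loss}, the misclassification risk is available in exact analytic form.
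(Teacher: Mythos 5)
Your mechanical setup matches the paper where it overlaps: the specialization of the kernel to $q^+ = \frac{1+(D^-)^2}{2+(D^+)^2+(D^-)^2}$ and $q^- = 1-q^+$ is exactly the paper's, the constant $a=\sqrt{(D^-)^2+2}$ is the right normalizer, and you correctly identify $\int \tan^{-1}(r/a)\,d\bigl[\tan^{-1}(r/a)\bigr]=\tfrac12\bigl[\tan^{-1}(r/a)\bigr]^2$ as the source of the squared arctangent. The gap is in \emph{what you integrate}. The paper does not define $R^+$ as an accumulated cross-entropy; it takes the ranking-reversal risk of~\cite{hist_loss} --- the probability that a random negative pair appears more similar than a random positive pair, $R=\int p^-(x)\bigl[\int^{x} p^+(y)\,dy\bigr]dx$ --- and substitutes the kernel responsibilities for the densities. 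The integrand is therefore $q^-(r)\,\Phi^+(r)$ with $\Phi^+(r)=\int_0^r q^+(y)\,dy=\frac{(D^-)^2+1}{a}\tan^{-1}(r/a)$: a density times its own CDF. Splitting $q^-=1-q^+$ gives exactly three elementary pieces: $\int\Phi^+$ produces the linear-times-arctangent and logarithm terms via integration by parts, and $\int q^+\Phi^+=\tfrac12(\Phi^+)^2$ produces the squared arctangent. No logarithm ever appears in the integrand.

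Your integrand, $-\log q^+$ against a kernel-induced weight, is a genuinely different object, and the obstacle you flag is fatal rather than bookkeeping: weighting $\log(a^2+r^2)$ by anything proportional to $(a^2+r^2)^{-1}$ yields $\int\log\cos\theta\,d\theta$ (Lobachevsky/Clausen) contributions whose antiderivatives genuinely involve dilogarithms, and this construction offers no mechanism for them to cancel. The elementary closed form in the statement --- in particular the $\bigl[\tan^{-1}\bigr]^2$ term --- is the signature of the density-times-CDF structure, not of a cross-entropy integral. So the missing idea is the definition itself: you must adopt the reversal-probability risk before the radial integration you describe can close in elementary terms. Your proposed verification by differentiating the claimed $R^+$ in $D^+$ is a good instinct; carried out against the paper's integrand it would also surface two apparent typos in the published formula (a missing factor of $\tfrac12$ on the squared-arctangent term, and $(D^+)^2$ where $D^+$ should appear inside the final arctangent).
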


\begin{proof}

Firstly, we consider  the risk of the similarity in a random negative pair to be more than the similarity in a random positive pair~\cite{hist_loss} as

$$
R = \int_{-\infty}^\infty p^-(x) \left[\int_{-\infty}^{y} p^+(y) dy\right]dx.
$$

%
Consider $ \mathcal{M}^+ = \{\mu^+\} $ and $ \mathcal{M}^- = \{\mu^-\} $, i.e., $\mathcal{M}^+$ and $ \mathcal{M}^-$ have cardinality 1, and $D^+(x_i, x_j)$ and $D^-(x_i, x_j)$ are the euclidean distances from the point $s_{ij}$ to the positive and negative markers, respectively. The risk of misclassification is

\begin{multline*}
R^+ = \int_{0}^{D^+(x_i,x_j)} \bigg[ p^-(D^+(x_i, x_j)) \\ \left( \int_{0}^{y = D^+(x_i, x_j)}   p^+(y) d(y)\right) d(D^+(x_i,x_j)) \Bigg].
\end{multline*}

 Therefore, due to the construction of S-space, we consider that the likelihood of similarity/dissimilarity between the $s_{ij}$ representation of two samples is calculated as the relative distance to a marker. Due to this construction, we have
 
$$
R^+ =  \int_{0}^{D^+(x_i,x_j)} q^- \left[ \int_{0}^{y = D^+(x_i, x_j)} q^+ d(y)\right] d(D^+(x_i,x_j)) .
$$

Calculating each term separately, we have that for the markers in the sets $ \mathcal{M}^+ $ and $ \mathcal{M}^- $, the probability of a having a similar objects is

\begin{equation*}
\begin{split}
q^+ &= \frac{(1+D^+(x_i,x_j)^2)^{-1}}{(1+D^+(x_i,x_j)^2)^{-1} + (1+D^-(x_i,x_j)^2)^{-1}} \\
&= \frac{1+D^-(x_i, x_j)^2}{2+D^+(x_i, x_j)^2+D^-(x_i, x_j)^2}.
\end{split}
\end{equation*}

Analogously, we can find that $ q ^ - = 1 - q ^ + $. 
To simplify the notation, consider that $ D^+ $ equals $ D^+(x_i, x_j) $ and $ D^- $ equals $ D^-(x_i, x_j) $.
We have for $ R^+$

\begin{equation*}
\begin{split}
R^+ &=  \int_{0}^{D^+} q^- \left[ \int_{0}^{y = D^+} q^+ d(y)\right]d(D^+) \\ 
&= \int_{0}^{D^+} (1-q^+) \left[\int_{0}^{y = D^+} q^+ d(y)\right] d(D^+).
\end{split}
\end{equation*}

Therefore, the risk of misclassification for the positive marker can be reduced to $ R^+ = \int_0^{D^+} (1-q ^ +) \Phi (D^+) dD^+ $, where $ \Phi $ is the cumulative density function (CDF) for $ q^+ $. 
%

Calculating the accumulated histogram $$\Phi(x) = \int_0^{x} \frac{1+(D^-)^2}{2+(x)^2+(D^-)^2 }dx,$$ we get
$$
\Phi^+(x) = \frac{((D^-)^2+1)tan^{-1}\left(\frac{x}{\sqrt{(D^-)^2+2}}\right)}{\sqrt{(D^-)^2+2}} .
$$

Therefore, by solving the integral $ \int_0^{D^+} (1-q^+) \Phi (D^+) dD^+ $, we have the exact analytical value of the misclassification risk function for the positive marker. With that, it follows that


\begin{multline*}
R^+ = \frac{(D^-)^2+1}{\sqrt{(D^-)^2+2}}\Bigg[\left(\sqrt{(D^-)^2+2}\right)log\left(\frac{1}{\sqrt{\frac{(D^+)^2}{(D^-)^2+2}+1}}\right) -\\- \frac{((D^-)^2+1)\left[tan^{-1}\left( \frac{(D^+)}{\sqrt{(D^-)^2+2}} \right)\right]^2}{\sqrt{(D^-)^2+2}} +\\+ (D^+)tan^{-1}\left(\frac{(D^+)^2}{\sqrt{(D^-)^2+2}}\right)\Bigg].
\end{multline*}

\end{proof}

\section{Ablation Results}
\label{sec:appendix_ablation}

This appendix show full comparison of ablation study for 27 different datasets.

\begin{table}[htpb]
\centering
\tiny
\begin{tabular}{p{1.74cm}
>{\columncolor{mygray}}c c
>{\columncolor{mygray}}c c
>{\columncolor{mygray}}c c
>{\columncolor{mygray}}c c
>{\columncolor{mygray}}c c}
\hline
    \multirow{2}{*}{Dataset} & SMELL  & SMELL  & SMELL & SMELL  &  SMELL 
     \\ 
     & $(r_r=0)$ & ($r_d=0$)  & ($r_r=r_d=0$ )   & (Euclidian) & (S-space)   \\
     \hline
     
Appendicitis & 78.90 $\pm$ 11.12 &   79.09 $\pm$ 11.02 & 79.09 $\pm$ 09.59 & 77.18 $\pm$ 9.13 & \textbf{80.19 $\pm$ 7.74} 
\\
Balance & 97.00 $\pm$ 01.03     &   97.00 $\pm$ 01.12 & 97.00 $\pm$ 1.02 & 98.40 $\pm$ 1.22 &  \textbf{98.88 $\pm$ 1.02} 
\\
Banana (10\%) & 89.44 $\pm$ 4.89 &   90.76 $\pm$ 08.83 & 90.01 $\pm$ 3.76  & 87.73 $\pm$ 1.14 &  \textbf{90.95 $\pm$ 4.42} 
\\
Bupa & 64.58 $\pm$ 09.92 &   \textbf{67.05 $\pm$ 09.62} & 66.37 $\pm$ 8.83 & 55.15 $\pm$ 10.12 &  63.92 $\pm$ 6.85 
\\
Cleveland & \textbf{52.22 $\pm$ 07.42} &   51.20 $\pm$ 05.65  & 52.21 $\pm$ 6.32 & 51.93 $\pm$ 7.22 & 51.24 $\pm$ 8.49 
\\
Glass & 67.52 $\pm$ 14.71 &   67.81 $\pm$ 12.16 & 66.91 $\pm$ 11.31 & \textbf{70.75 $\pm$ 11.46} &  66.94 $\pm$ 13.24 
\\
Ionosphere & \textbf{89.75 $\pm$ 5.14} &  88.89 $\pm$ 02.37 & \textbf{89.75 $\pm$ 4.98} & 84.88 $\pm$ 6.79 &   89.47  $\pm$ 4.59 
\\
Iris & 95.33 $\pm$ 04.00 &   94.67 $\pm$ 04.47 & 94.67 $\pm$ 4.27 & 95.33 $\pm$ 4.27 & \textbf{96.00 $\pm$ 3.26} 
\\
Letter (10\%)& \textbf{80.30 $\pm$ 3.72} &  77.58 $\pm$ 08.40 & 77.77 $\pm$ 3.82 & 62.96 $\pm$ 8.62 &  77.77 $\pm$ 4.72 
\\
Magic (10\%) & 84.44 $\pm$ 02.98&  84.44 $\pm$ 02.74 & \textbf{84.49 $\pm$ 2.37} & 78.92 $\pm$ 7.11 &   83.49 $\pm$ 2.58 
\\
Monk-2 & \textbf{100 $\pm$ 0.00} &  \textbf{100.00 $\pm$ 0.00} & \textbf{100 $\pm$ 0.00} & \textbf{100 $\pm$ 0.00} & \textbf{100 $\pm$ 0.00} 
\\
Movement-libras & 86.21 $\pm$ 04.44 &  84.72 $\pm$ 04.44 & 85.56 $\pm$ 5.39 & 83.33 $\pm$ 6.92 & \textbf{87.78 $\pm$ 3.61}
\\
Newthyroid & \textbf{97.71 $\pm$ 02.25} & \textbf{97.71 $\pm$ 03.57} & 96.77 $\pm$ 2.29 & 90.71 $\pm$ 10.97 &  96.77 $\pm$ 2.11 
\\
Phoneme (10\%) & 82.77 $\pm$ 03.89  & \textbf{83.87 $\pm$ 04.97} & 82.21 $\pm$ 3.04 & 81.29 $\pm$ 4.89 & 81.48 $\pm$ 4.31 
\\
Pima & 70.44 $\pm$ 05.17 & 70.98 $\pm$ 05.96 & \textbf{71.75 $\pm$ 2.65} & 69.68 $\pm$ 3.87 & 70.44 $\pm$ 3.52
\\
Ring (10\%) & \textbf{89.32 $\pm$ 21.56} & 65.36 $\pm$ 19.21 & 66.10 $\pm$ 3.11 & 71.80 $\pm$ 16.55 &  89.22 $\pm$ 4.56 
\\
Satimage (10\%) & 83.49 $\pm$ 02.94 & 85.20 $\pm$ 03.12 & 84.58 $\pm$ 4.45 & \textbf{85.37 $\pm$ 3.11} & 84.11 $\pm$ 3.57 
\\   
Segment (10\%) & \textbf{90.48 $\pm$ 04.48} &  89.05 $\pm$ 07.22 & 89.05 $\pm$ 4.42 & 90.27 $\pm$ 4.76 &  89.76 $\pm$ 3.96 
\\
Sonar & 84.05 $\pm$ 10.53 &  \textbf{85.52 $\pm$ 10.92} & 84.55 $\pm$ 10.44 & 81.76 $\pm$ 12.88 & 83.59 $\pm$ 11.26 
\\
Thyroid (10\%) & 95.59 $\pm$ 02.43  &  95.56 $\pm$ 02.15 & \textbf{95.69 $\pm$ 2.51} & 94.71 $\pm$ 2.24 &  94.71 $\pm$ 2.45 
\\
Titanic (10\%) & 62.41 $\pm$ 16.07 &  62.56 $\pm$ 05.23 & 63.76 $\pm$ 16.22 & \textbf{73.13 $\pm$ 6.13} & 66.97 $\pm$ 15.85 
\\
Twonorm (10\%) & 96.00 $\pm$ 01.41 &  96.75 $\pm$ 06.07 & 97.00 $\pm$ 1.48 & 93.78 $\pm$ 13.33 & \textbf{97.43 $\pm$ 1.76 }
\\
Vehicle & 84.40 $\pm$ 02.00 &  83.34 $\pm$ 12.02 & \textbf{84.87 $\pm$ 3.33} & 78.32 $\pm$ 18.33 &  84.40 $\pm$ 2.63 
\\
Vowel & 98.12 $\pm$ 00.79 & 98.12 $\pm$ 15.63 & \textbf{98.99 $\pm$ 1.09} & 98.48 $\pm$ 1.04 &  \textbf{98.99 $\pm$ 0.90}
\\
Wdbc & \textbf{96.65 $\pm$ 02.99} & 96.47 $\pm$ 02.40 & \textbf{96.65 $\pm$ 2.99} & 89.11 $\pm$ 14.63 & \textbf{96.65 $\pm$ 2.89 }
\\
Wine & 97.77 $\pm$ 03.59 & 97.71 $\pm$ 05.12 & \textbf{98.30 $\pm$ 5.11} & 97.19 $\pm$ 5.14 & 97.77 $\pm$ 5.11
\\
Wisconsin & \textbf{96.21 $\pm$ 02.14} & 95.91 $\pm$ 01.84 & 95.62 $\pm$ 1.87 &  95.91 $\pm$ 2.48 &  \textbf{96.21 $\pm$ 2.26} 
\\
\hline
Accuracy\_AVG & 0.8254 & 0.8169 & 0.8178 & 0.7994 & \textbf{0.8268}
\\
Ranking\_AVG & \textbf{2.2857} & 2.8571 & \textbf{2.2857} & 3.6429 & \textbf{2.2857 }
\\
Diff\_AVG & 0.0116 & 0.0201 & 0.0193 & 0.0376 & \textbf{0.0102}
\\
\# of 1$^{\text{st}}$ & 9 & 5 & 9 & 4 & \textbf{10}
\\
\hline
\end{tabular}
\caption{Performance comparison of ablation study when using  KNN classification for 27 different datasets.}
\label{tab:ablation_full}
\end{table}
 \pagebreak

\section{Performance Comparison}
\label{app_results}

This appendix show full comparison of all distance metric learning using in this work.

\begin{sidewaystable*}[htbp]
\centering
\tiny
\begin{tabular}{p{1.7cm}
>{\columncolor{mygray}}c c
>{\columncolor{mygray}}c c
>{\columncolor{mygray}}c c
>{\columncolor{mygray}}c c
>{\columncolor{mygray}}c c
>{\columncolor{mygray}}c c
>{\columncolor{mygray}}c l}
\hline
    Dataset &
    ANMM\cite{anmm} &
    KDMLMJ\cite{NGUYEN2017215}   &
    Contrastive\cite{drlim} &
    MSLoss\cite{wang2019multi}  & Triplet\cite{Schroff_2015_CVPR} & NCA\cite{NIPS2004_2566}   & NPair\cite{Nparloss}    & FastAP\cite{Fastaploss}    &
    Euclidian &
    SMELL 
     \\ \hline
     
Appendicitis & 84.27 $\pm$ 10.64 &\textbf{85.09 $\pm$ 9.94} & 84.18 $\pm$ 9.95 & 84.09 $\pm$ 11.02 & 81.27 $\pm$ 10.79 & 84.09 $\pm$ 9.39 & 84.90 $\pm$ 08.61 & 84.09 $\pm$ 10.07 & 84.27 $\pm$ 10.64 & 81.09 $\pm$ 7.74 
\\
Balance & 80.81 $\pm$ 4.22 &79.83 $\pm$ 04.32 & 78.93 $\pm$ 23.54    &   98.24 $\pm$ 01.12 & 96.17 $\pm$ 2.03 & 95.84 $\pm$ 2.28 & 89.74 $\pm$ 08.00&  98.24 $\pm$ 01.31 & 80.16 $\pm$ 5.06  & \textbf{98.88 $\pm$ 1.02}    
\\
Banana (10\%) & 56.18 $\pm$ 12.43 &55.62 $\pm$ 13.25 & 89.06 $\pm$ 12.97 & 84.38 $\pm$ 08.83 & 90.19 $\pm$ 3.85 & 86.79 $\pm$ 2.15 & 89.44 $\pm$ 05.25 & 72.59 $\pm$ 13.55 & 56.18 $\pm$ 12.43  & \textbf{90.95 $\pm$ 4.42} 
\\
Bupa & 60.95 $\pm$ 9.04 & 65.00 $\pm$ 7.888 & 49.63 $\pm$ 5.01 &  67.05 $\pm$ 09.62 & \textbf{68.63 $\pm$ 5.28} & 57.79 $\pm$ 6.62 & 57.16 $\pm$ 08.84 & 58.77 $\pm$ 11.71 & 64.95 $\pm$ 9.04 & 63.92 $\pm$ 6.85 
\\
Cleveland & 55.14 $\pm$ 6.92 & 48.83 $\pm$ 06.64 & 55.54 $\pm$ 2.76 & 54.55 $\pm$ 05.65  & 56.92 $\pm$ 6.92 & 50.23 $\pm$ 6.76 & \textbf{56.99 $\pm$ 06.61} & 55.80 $\pm$ 8.29 & 55.13 $\pm$ 6.92 & 51.24 $\pm$ 8.49 
\\
Glass & 68.13 $\pm$ 10.26 & \textbf{68.95 $\pm$ 10.78} & 51.11 $\pm$ 17.89 & 66.71 $\pm$ 12.16 & 68.30 $\pm$ 11.82 & 67.02 $\pm$ 10.40 & 60.66 $\pm$ 08.20 & 64.65 $\pm$ 9.33 & 68.13 $\pm$ 10.25 & 66.94 $\pm$ 13.24 
\\
Ionosphere & 85.18 $\pm$ 4.92 & 84.04 $\pm$ 03.43 & 86.16 $\pm$ 19.30 & \textbf{94.02 $\pm$ 02.37} & 92.89 $\pm$ 2.89 & 88.88 $\pm$ 4.15 & 86.33 $\pm$ 07.32 & 92.60 $\pm$ 3.39 &85.18 $\pm$ 4.92 & 89.47  $\pm$ 4.59 
\\
Iris & 94.00 $\pm$ 4.67 & 96.00 $\pm$ 04.00 & \textbf{96.67 $\pm$ 4.47} & \textbf{96.67 $\pm$ 04.47} & 96.00 $\pm$ 2.72 & 95.33 $\pm$ 4.27 & 94.67 $\pm$ 04.00 & \textbf{96.67 $\pm$ 3.33}  & 94.00 $\pm$ 4.67 & 96.00 $\pm$ 3.26 
\\
Letter (10\%) & 78.93 $\pm$ 10.95 & \textbf{85.7 $\pm$ 11.83} & 25.95 $\pm$ 21.15 & 77.64 $\pm$ 08.40 & 46.54 $\pm$ 23.79 & 61.55 $\pm$ 21.08 & 46.10 $\pm$ 08.06 & 46.55 $\pm$ 23.79 & 79.13 $\pm$ 8.95 & 77.77 $\pm$ 4.72 
\\
Magic (10\%) & 62.28 $\pm$ 9.68 & 77.34 $\pm$ 09.54 & 73.5 $\pm$ 5.57 & \textbf{84.65 $\pm$ 02.74} & 84.07 $\pm$ 2.64 & 67.99 $\pm$ 9.85 & 82.07 $\pm$ 02.58 & 84.57 $\pm$ 2.60 & 62.82 $\pm$ 9.68 & 83.49 $\pm$ 2.58 
\\
Monk-2 & 95.89 $\pm$ 3.92 &\textbf{100 $\pm$ 0.00} & 71.44 $\pm$ 12.13 & 96.52 $\pm$ 03.29 & 98.37 $\pm$ 1.48 & \textbf{100 $\pm$ 0.00} & 95.21 $\pm$ 04.92 & 98.86 $\pm$ 1.52 & 98.18 $\pm$ 2.72 & \textbf{100 $\pm$ 0.00} 
\\
Movement-libras & 80.83 $\pm$ 3.39 & 87.22 $\pm$ 05.28 & 53.33 $\pm$ 21.64 &  82.78 $\pm$ 04.44 & 75.00 $\pm$ 10.54 & 81.67 $\pm$ 4.51 & 33.61 $\pm$ 11.81 & 67.78 $\pm$ 20.38 & 80.12 $\pm$ 3.39 & \textbf{87.78 $\pm$ 3.61}  
\\
Newthyroid & 95.36 $\pm$ 2.95 & 94.87 $\pm$ 03.58 & 97.25 $\pm$ 3.16 & 96.77 $\pm$ 03.57 & 95.84 $\pm$ 3.84 & \textbf{97.73 $\pm$ 3.05} & 96.32 $\pm$ 04.95 & 97.25 $\pm$ 3.65 & 95.37 $\pm$ 2.95  & 96.77 $\pm$ 2.11 
\\
Phoneme (10\%) & 81.91 $\pm$ 8.87 &81.91 $\pm$ 08.70 & 70.16 $\pm$ 9.79 & 82.39 $\pm$ 04.97 & 79.96 $\pm$ 5.88 & \textbf{83.4 $\pm$ 9.81} & 75.73 $\pm$ 04.41 & 76.64 $\pm$ 8.12 & 62.71 $\pm$ 8.87 & 81.48 $\pm$ 4.31  
\\
Pima & 72.93 $\pm$ 4.30 & 69.28 $\pm$ 04.04 & 64.46 $\pm$ 4.65 & 74.01 $\pm$ 05.96 & \textbf{74.11 $\pm$ 3.96} & 70.44 $\pm$ 3.78 & 72.80 $\pm$ 05.22 & 72.55 $\pm$ 3.94 & 72.93 $\pm$ 4.30 & 70.44 $\pm$ 3.52 
\\
Ring (10\%) & 51.70 $\pm$ 6.39 & 59.61 $\pm$ 07.46 & 73.15 $\pm$ 11.33 & 79.45 $\pm$ 19.21 & \textbf{93.91 $\pm$ 4.84} & 51.19 $\pm$ 9.57 & 82.71 $\pm$ 04.11 & 70.10 $\pm$ 20.91 & 51.77 $\pm$ 6.39 & 89.22 $\pm$ 4.56 
\\
Satimage (10\%) & 84.18 $\pm$ 9.65 & 74.58 $\pm$ 09.15 & 59.50 $\pm$ 17.88 & \textbf{86.35 $\pm$ 03.12} & 82.89 $\pm$ 4.25 & 64.93 $\pm$ 21.39 & 69.82 $\pm$ 11.49 & 83.99 $\pm$ 4.72 & 52.93 $\pm$ 19.66 & 84.11 $\pm$ 3.57 
\\   
Segment (10\%) & 73.12 $\pm$ 12.48 &83.69 $\pm$ 04.05 & 67.11 $\pm$ 17.72 & 84.76 $\pm$ 07.22 & 92.21 $\pm$ 4.68 & 64.88 $\pm$ 15.37 & 57.02 $\pm$ 16.15 & \textbf{92.74 $\pm$ 3.85} & 53.13 $\pm$ 22.48 & 89.76 $\pm$ 3.96 
\\
Sonar & 83.07 $\pm$ 10.58 &82.09 $\pm$ 12.12 & 59.26 $\pm$ 10.03 & 86.00 $\pm$ 10.92 & \textbf{88.43 $\pm$ 9.13} & 86.50 $\pm$ 9.55 & 79.33 $\pm$ 11.93 & 85.05 $\pm$ 10.91 & 82.70 $\pm$ 10.57 & 83.59 $\pm$ 11.26 
\\
Thyroid (10\%) & 90.99 $\pm$ 2.07 &92.65 $\pm$ 02.21 & 91.68 $\pm$ 1.79 & 94.48 $\pm$ 02.15 & 93.69 $\pm$ 2.19 & 90.14 $\pm$ 2.18 & 93.74 $\pm$ 02.50 & 64.17 $\pm$ 31.81 & 90.99 $\pm$ 2.07 & \textbf{94.71 $\pm$ 2.45} 
\\
Titanic (10\%) & 61.74 $\pm$ 16.72 &71.82 $\pm$ 12.67 & 73.13 $\pm$ 5.05 & 72.67 $\pm$ 05.23 & 73.18 $\pm$ 5.16 & 64.16 $\pm$ 9.05 & 72.67 $\pm$ 05.34 & 73.61 $\pm$ 5.78 & \textbf{74.14 $\pm$ 10.96} &  66.97 $\pm$ 15.85 
\\
Twonorm (10\%) & 93.83 $\pm$ 4.54 & 95.95 $\pm$ 04.39 & 91.89 $\pm$ 10.69 & 95.95 $\pm$ 06.07 & 96.62 $\pm$ 1.38 & 95.95 $\pm$ 8.16 & \textbf{98.11 $\pm$ 01.38} & 93.78 $\pm$ 10.22 & 97.5 $\pm$ 4.59 & 97.43 $\pm$ 1.76 
\\
Vehicle & 70.21 $\pm$ 3.66 & 65.95 $\pm$ 03.46 & 40.23 $\pm$ 10.55  & 74.13 $\pm$ 12.02 & 81.91 $\pm$ 4.02 & 74.71 $\pm$ 3.13 & 45.17 $\pm$ 08.40 & \textbf{85.58 $\pm$ 3.45} &  65.95 $\pm$ 3.66 & 84.40 $\pm$ 2.63 
\\
Vowel & 97.77 $\pm$ 0.98 & 98.28 $\pm$ 01.37&  88.69 $\pm$ 5.58 &57.37 $\pm$ 15.63& 95.25 $\pm$ 3.10 & 97.68 $\pm$ 1.20 & 72.12 $\pm$ 06.70 & 78.79 $\pm$ 10.75 & 98.28 $\pm$ 0.98 &  \textbf{98.99 $\pm$ 0.90}
\\
Wdbc & 96.48 $\pm$ 2.49 & 92.79 $\pm$ 03.32 & 95.61 $\pm$ 4.72 &\textbf{97.36 $\pm$ 02.40} & 95.18 $\pm$ 1.97 & 92.44 $\pm$ 3.16 & 93.86 $\pm$ 07.49 & 96.48 $\pm $2.72 & 92.79 $\pm$ 2.49 & 96.65 $\pm$ 2.89 
\\
Wine & 95.52 $\pm$ 4.17 & 97.71 $\pm$ 02.80 & 95.51 $\pm$ 4.72 & 97.22 $\pm$ 05.12 & 82.56 $\pm$ 2.22 & 97.44 $\pm$ 2.54 & 89.25 $\pm$ 13.83 & 96.63 $\pm$ 5.11 & 69.18 $\pm$ 4.16  &
\textbf{97.77 $\pm$ 5.11} 
\\
Wisconsin & 96.52 $\pm$ 2.79 &\textbf{96.66 $\pm$ 02.71} & 74.78 $\pm$ 2.51 & 96.05 $\pm$ 01.84 & 96.05 $\pm$ 2.36 &  95.82 $\pm$ 1.73& 95.92 $\pm$ 02.58 & 96.22 $\pm$ 2.33 & 96.39 $\pm$ 2.94 & 96.21 $\pm$ 2.26 
\\
\hline
Accuracy\_AVG & 0.7768 & 0.7827 & 0.6923 & 0.8076 & 0.8115 & 0.7732 & 0.7380 & 0.7801 & 0.7486 & \textbf{0.8268 }
\\
Ranking\_AVG & 5.71429 & 4.96429 & 7.17857 & 3.7857 & 3.9643 & 5.67857 & 6.2143 & 4.7857 & 5.89286 & \textbf{4.3214} 
\\
Diff\_AVG & 0.0726 & 0.0667 & 0.1571 & 0.0418 & 0.0379 & 0.0762 & 0.1114 & 0.0693 & 0.1007 & \textbf{0.0242} 
\\
\# of 1$^{\text{st}}$& 0 & 5 & 1 & 5 & 4& 3 & 2 & 3 & 1 & \textbf{7} 
\\
\hline
\end{tabular}
\caption{Performance comparison of some distance metrics approaches and SMELL when using  KNN classification for 27 different datasets.}
\label{tab:results_p1_full}
\end{sidewaystable*}
\pagebreak

\bibliography{elsarticle-template}

\end{document}